\begin{document}
\title{Lean Clients, Full Accuracy: Hybrid \\Zeroth- and First-Order Split Federated Learning}

\author{
Zhoubin Kou, Zihan Chen, Jing Yang, Cong Shen
\thanks{The authors are with the \textit{Charles L. Brown} Department of Electrical and Computer Engineering, University of Virginia, USA. (E-mail: \texttt{\{zhoubin, brf3rx, yangjing, cong\}@virginia.edu}.)}
}

\markboth{}%
{Shell \MakeLowercase{\textit{et al.}}: A Sample Article Using IEEEtran.cls for IEEE Journals}

\maketitle

\begin{abstract}
Split Federated Learning (SFL) enables collaborative training between resource-constrained edge devices and a compute-rich server. Communication overhead is a central issue in SFL and can be mitigated with auxiliary networks. Yet, the fundamental client-side computation challenge remains, as back-propagation requires substantial memory and computation costs, severely limiting the scale of models that edge devices can support. To enable more resource-efficient client computation and reduce the client-server communication, we propose HERON-SFL, a novel hybrid optimization framework that integrates zeroth-order (ZO) optimization for local client training while retaining first-order (FO) optimization on the server.
With the assistance of auxiliary networks, ZO updates enable clients to approximate local gradients using perturbed forward-only evaluations per step, eliminating memory-intensive activation caching and avoiding explicit gradient computation in the traditional training process.
Leveraging the low effective rank assumption, we theoretically prove that HERON-SFL's convergence rate is \emph{independent} of model dimensionality, addressing a key scalability concern common to ZO algorithms.
Empirically, on ResNet training and language model (LM) fine-tuning tasks, HERON-SFL matches benchmark accuracy while reducing client peak memory by up to 64\% and client-side compute cost by up to 33\% per step, substantially expanding the range of models that can be trained or adapted on resource-limited devices.

\end{abstract}

\begin{IEEEkeywords}
Split Federated Learning; Resource-constrained Distributed Optimization; Edge Computing; Zeroth-order Optimization.
\end{IEEEkeywords}

\section{Introduction}
\IEEEPARstart{D}{istributed} machine learning (DML), such as Federated Learning (FL) and Split Learning (SL), is critical for privacy-sensitive domains like healthcare \cite{vepakomma2018split} and IoT \cite{wu2020collaborate}, where data cannot be centrally shared \cite{kairouz2021advances}. While FL enables parallel training \cite{konevcny2015federated, mcmahan2017communication}, it requires full model deployment on clients, burdening resource-constrained devices and risking model privacy. Conversely, SL \cite{gupta2018distributed} reduces client-side computation via model partitioning but suffers from sequential, relay-based training that limits throughput at scale. To bridge this gap, Split Federated Learning (SFL) \cite{thapa2022splitfed} has emerged as a paradigm particularly attractive for edge–cloud systems. By integrating FL's parallelism with SL's resource efficiency, it enables scalable collaborative training while maintaining data locality and privacy.

Specifically, SFL partitions the neural network into a client-side frontend and a server-side backend. During training, clients transmit intermediate activations (smashed data) to the server and wait for gradients, enabling privacy-preserving collaboration. Recent SFL work equips each client with an auxiliary network (typically a lightweight output layer) that estimates the cut-layer gradients locally \cite{ han2021accelerating, oh2022locfedmix}. This design decouples the client from the server, allowing the client sub-model to update immediately without waiting for the server’s backward pass and avoiding gradient transmission from the server side~\cite{mu2025federated, nair2025fsl}.
However, this decoupling typically increases client-side complexity: \emph{clients must also train the auxiliary module with backpropagation}, which requires caching intermediate activations whose footprint grows with model depth/width and is further amplified by the auxiliary branch; for modern deep convolutional neural network (CNN) and Transformers \cite{vaswani2017attention}, activation memory can dominate the on-device footprint and exceed tight edge-device RAM budgets~\cite{cai2020tinytl,korthikanti2023reducing}.
This raises a critically important question with substantial practical implications: \textit{can we preserve the decoupled, auxiliary-network SFL communication pattern while reducing client-side training overhead (backpropagation and activation caching), without sacrificing model performance?}

We answer this question affirmatively by proposing {HERON-SFL}, a Hybrid zEroth- and fiRst-Order (FO) optimizatioN training framework that performs zeroth-order (ZO) optimization \cite{liu2020primer} on the client side and retains FO on the server side. Unlike FO, ZO gradient-free updates are based on a few perturbed forward evaluations and thus drastically reduce the computation load. However, ZO methods are often criticized for slow and high-variance updates in high dimensions. 
Fortunately, under the SFL with auxiliary network setting, ZO optimization is confined to the shallow, low-dimensional client-side submodel, allowing HERON-SFL to leverage the memory- and computation-efficiency of zeroth-order updates while avoiding their typical slow and high-variance behavior from affecting the overall model training \cite{chen2023layer}.
Under a low effective rank condition \cite{malladi2023fine}, we theoretically prove that HERON-SFL admits \emph{dimension-independent} convergence, and the experiments show HERON-SFL's accuracy is comparable to FO baselines while greatly reducing the memory and computation load.
Meanwhile, the perturbations in ZO are fully local and incur \textit{no} additional uploads of smashed activations beyond those already required for server-side FO training. As a result, the server receives smashed activations at the same frequency as in auxiliary-network-based FO SFL.
Our main contributions are summarized as follows: 
\begin{itemize}[leftmargin=10pt]
    \item We propose {HERON-SFL}, a novel hybrid zeroth- and first-order SFL framework. 
    Building upon an auxiliary network that enables decoupled local training, we introduce zeroth-order (ZO) optimization on the client-side. This eliminates backpropagation for local updates, thereby greatly reducing on-device memory and computational costs.
    
    \item We provide the first theoretical study of hybrid ZO–FO optimization in SFL. Our analysis shows that under a low effective rank assumption, which is supported by empirical evidence, HERON-SFL achieves an $\mathcal{O}(1/\sqrt{T})$ convergence rate, which matches the convergence order of FO SFL analyses under comparable assumptions. 
    
    \item We conduct comprehensive experiments spanning both vision (ResNet training) and language (LM fine-tuning) tasks. Results show that HERON-SFL consistently reduces client peak memory by up to 64\% and client computation per step by up to 33\%, while matching the accuracy of state-of-the-art, auxiliary-network-based FO SFL methods. These gains highlight HERON-SFL’s algorithm-level superiority and practicality, enabling advanced models on previously infeasible devices without increasing communication beyond auxiliary-network-based FO SFL.
\end{itemize}

\section{Related Works}
\subsection{Split Federated Learning}
While modern foundation models achieve state-of-the-art performance \cite{brown2020language, chowdhery2023palm}, their immense computational and memory requirements restrict them to data centers, limiting their real-world reach \cite{ sani2024future}. SFL was proposed by merging Federated Learning (FL) \cite{mcmahan2017communication} with Split Learning (SL) \cite{vepakomma2018split} to enhance data privacy and robustness \cite{thapa2022splitfed, lee2024exploring}. 
However, standard SFL entails prohibitive communication overhead due to the transmission of high-dimensional intermediate data \cite{vepakomma2018split}.
Recent work has explored communication-efficient strategies specifically under the SFL architecture, primarily targeting the transmission of smashed data. By leveraging activation compression \cite{shiranthika2024splitfedzip}, quantization \cite{zhang2024federated}, sparsification \cite{zheng2023reducing}, or adaptive upload policies \cite{liu2022wireless}, these methods seek to reduce communication overhead while preserving the SL workflow.

Nevertheless, reducing the data payload does not resolve the inherent limitations of the SL paradigm. The synchronous update lock persists, as clients must strictly await gradients from the server before updating, leading to unavoidable idle time \cite{kairouz2021advances}. To address this, {\it algorithmic decoupling} aims to eliminate the synchronous lock by incorporating a client-side auxiliary model. This approach decouples client and server updates by generating local gradient estimates, thereby obviating the need for server-to-client gradient transmission \cite{han2021accelerating, mu2025federated, oh2022locfedmix, nair2025fsl}. Inspired by decoupled training in centralized settings \cite{belilovsky2019greedy}, this strategy can significantly reduce communication latency. However, these auxiliary models introduce a significant trade-off: a substantial increase in the client's computational and memory footprint, as the auxiliary network can be considerably larger than the primary client-side model itself \cite{nair2025fsl}.

Concurrently, to address the heterogeneity and resource constraints of real-world edge environments, recent research has pursued {\it system-level optimization}. These methods adapt the SFL protocol to varying network and hardware conditions. Key contributions include adaptive model splitting based on network status \cite{lin2024adaptsfl}, hierarchical topologies for managing client resources \cite{lin2025hierarchical}, parallel training designs optimized for wireless networks \cite{wu2023split}, and dynamic resource-based tiers to accelerate training under heterogeneous environments \cite{mohammadabadi2024speed}. 
However, these system-centric designs inevitably introduce additional implementation complexity, often requiring sophisticated orchestration to manage dynamic topologies or splitting points based on specific heterogeneity assumptions. In contrast, this work targets resource efficiency from an \textit{algorithmic perspective}, aiming to construct a streamlined SFL framework that inherently minimizes resource consumption without relying on complex system-level coordination.

\subsection{Zeroth-Order Optimization for DML}
ZO optimization estimates gradients through function evaluations \cite{liu2020primer}, making it particularly useful when explicit gradients are unavailable, such as in reinforcement learning \cite{nakashima2025unifying, zhang2024zeroth} and privacy-sensitive scenarios \cite{chen2017zoo, liu2018zeroth, liu2019signsgd}. 
Recently, ZO has gained attention as an efficient strategy for training \cite{chen2024deepzero} and fine-tuning \cite{malladi2023fine}, since it avoids back-propagation's memory and compute overhead. 
In distributed machine learning, ZO has been explored as a gradient estimator in FL, demonstrating promising benefits in privacy preservation \cite{zhang2021desirable, fang2022communication, ling2024convergence} and communication reduction \cite{li2024achieving}. 
However, its adoption in the SFL framework remains limited. 
The main barrier is that variance reduction in ZO requires multiple perturbations, which would substantially increase intermediate activation transmissions and thus communication overhead. 
To address this, we \textit{restrict ZO to the client-side} with the help of auxiliary networks, enabling resource-efficient training while avoiding additional communication costs.

\section{Preliminaries}
\subsection{SFL with Auxiliary Network}
We consider a \textit{synchronous} SFL system with one server and $N$ clients, each holding a private dataset $\mathcal{D}_i$, where the entire dataset is the set $\{\mathcal{D}_i\}_{i=1}^{N}$.
The global model is split at a cut layer into client- and server-side sub-models, where we denote the collection of parameters as $\boldsymbol{\theta}_g=\{\boldsymbol{\theta}_c,\boldsymbol{\theta}_s\}$. 
Each client $i$ owns a local version of the client-side model, $\boldsymbol{\theta}_{c,i}$. 
For a sample $\xi_{i, j}\in\mathcal{D}_i$, client $i$ performs a forward pass up to the cut layer to produce the smashed data, $\boldsymbol s_i=\boldsymbol{\theta}_{c,i}(\xi_{i, j})$, and uploads it to the Main-Server. 
The server then completes the forward pass by processing these activations with its sub-model $\boldsymbol{\theta}_s$.
The goal is to minimize the global loss function:
\begin{equation}
\begin{aligned}
\min_{\boldsymbol{\theta}_c,\boldsymbol{\theta}_s}\ f(\boldsymbol{\theta}_g)
=\frac{1}{N}\sum\nolimits_{i=1}^N f_i(\boldsymbol{\theta}_g)
= \frac{1}{N}\sum\nolimits_{i=1}^N \mathbb{E}_{\xi_{i, j}\sim\mathcal D_i}\!\left[\ell(\boldsymbol{\theta}_g;\xi_{i, j})\right],
\end{aligned}
\end{equation}
where $f_i(\boldsymbol{\theta}_g)$ and $f(\boldsymbol{\theta}_g)$ measure the expected loss on the global model over client $i$'s local dataset $\mathcal{D}_i$ and the entire dataset, respectively, computed using a task-specific loss function $\ell(\cdot)$ (e.g., cross-entropy).

We adopt the single Main-Server style framework\footnote{Recent studies CSE-FSL~\cite{mu2025federated} and FSL-SAGE~\cite{nair2025fsl} report that auxiliary-network SFL based on the SFLV2-style protocol can yield substantially improved performance compared with SFLV1-style variants.}: a single server-side model $\boldsymbol{\theta}_s$ resides on the Main-Server and is trained by sequentially processing smashed data $\boldsymbol s_i$ from all clients, while a Fed-Server aggregates client-side parameters into the average
$\bar{\boldsymbol{\theta}}_c := \frac{1}{N}\sum\nolimits_{i} \boldsymbol{\theta}_{c,i}$ (initial parameters for the next round). 
To reduce communication overhead and enable client-side local feedback, each client $i$ attaches an auxiliary (Aux) model $\boldsymbol{\theta}_{a,i}$ to form a local predictor $\boldsymbol{\theta}_{l,i}(\xi_{i, j})
= \boldsymbol{\theta}_{a,i}\!\left(\boldsymbol{\theta}_{c,i}(\xi_{i, j})\right)$, where $\boldsymbol{\theta}_{l,i}=\{\boldsymbol{\theta}_{c,i},\boldsymbol{\theta}_{a,i}\}$~\cite{mu2025federated, oh2022locfedmix}.
Because of the Aux model, the SFL system breaks the \textit{training lock} between $\boldsymbol{\theta}_c$ and $\boldsymbol{\theta}_s$: by leveraging $\boldsymbol{\theta}_a$, the client can perform local updates independently of server‐side gradient feedback.

After initializing the global model $\{\boldsymbol{\theta}_c,\boldsymbol{\theta}_s\}$, the basic SFL-Aux algorithm proceeds: 
in each round, client $i$ computes smashed data $\mathcal S_{i}=\boldsymbol{\theta}_{c,i}(\xi_{i})$ on local mini-batches $\xi_{i} = \{\xi_{i, j}\}_{j=1}^B$ and uploads them to the Main-Server, while updating $\boldsymbol{\theta}_{l,i}$ by minimizing a local loss from $\boldsymbol{\theta}_{a,i}(\mathcal S_{i})$, with backpropagation confined to the client.
The Main-Server queues smashed data from all clients and sequentially executes forward/backward passes to update $\boldsymbol{\theta}_s$. 
After a fixed number of local steps, the Fed-Server aggregates all participated $\boldsymbol{\theta}_{l, i}$ (e.g., via weighted averaging like FedAvg~\cite{mcmahan2017communication}) and broadcasts updated global model $\bar{\boldsymbol{\theta}}_{l}$  to all clients to initiate the next round.

\subsection{Zeroth-Order Gradient Estimator}
Unlike prior methods \cite{mu2025federated, han2021accelerating, nair2025fsl} that rely on full forward and backward passes through the client and its auxiliary network to compute first-order gradients $\nabla\ell(\boldsymbol{\theta}_l;\xi_i)$, we adopt a mini-batch-type ZO gradient estimator with two-point evaluation. Specifically, for function $f_{l,i}$, the two-point type stochastic ZO gradient estimator is defined as:
\begin{equation}\label{Eq:ZerothOrderGradientEstimator}
    \begin{aligned}
        \hat{\nabla} f_{l,i}(\boldsymbol\theta_{l};\xi_i)
        = \frac{1}{B}\sum\nolimits_{j=1}^{B}\frac{d\boldsymbol u}{\mu}[\ell_{l,i}(\boldsymbol\theta_{l} + \mu \boldsymbol u;\xi_{i, j}) - \ell_{l,i}(\boldsymbol\theta_{l};\xi_{i, j}) ],
    \end{aligned}
\end{equation}
where $\boldsymbol u$ is a random vector drawn from either a Gaussian or a Uniform ball distribution, $\mu$ is a positive perturbation step size. 
This estimator approximates the smoothed objective function's gradient. Formally, it can be shown that this estimator is an unbiased estimate of $\nabla f_{l,i}^\mu(\boldsymbol{\theta}_{l})$, where $f_{l,i}^\mu$ is the Gaussian-smoothed surrogate of the original function $f_{l,i}$. 
This estimator approximates the gradient of a smoothed surrogate objective, which is defined below.

\begin{definition}[Gaussian Smoothed Function with Unit-Sphere Normalization]
A function $f:\mathbb{R}^d\to\mathbb{R}$ is said to be spherically smoothed with radius $\mu>0$ if
\begin{equation}
f^\mu(\boldsymbol{x})
= \mathbb{E}_{\boldsymbol{z}\sim\mathcal{N}(0,I_d)}
\!\big[f\!\big(\boldsymbol{x}+\mu\,\frac{\boldsymbol{z}}{\|\boldsymbol{z}\|}\big)\big],
\end{equation}
where $\boldsymbol{u}:=\boldsymbol{z}/\|\boldsymbol{z}\|$ satisfies $\|\boldsymbol{u}\|=1$ almost surely and
$\boldsymbol{u}\sim\mathrm{Unif}(\mathbb{S}^{d-1})$.
\end{definition}

We recall a standard result on Gaussian smoothing \cite{nesterov2017random}: 
if $f$ is $L$-smooth, then $f^\mu$ is continuously differentiable with $L_\mu \le L$, and
\begin{equation}
    \nabla f^\mu(\boldsymbol{x})
= \mathbb{E}_{\boldsymbol{u}}
\!\left[\frac{f(\boldsymbol{x}+\mu \boldsymbol{u})-f(\boldsymbol{x})}{\mu} \boldsymbol{u}\right].
\end{equation}
Consequently, the two-point zeroth-order estimator in (2) is an unbiased estimator of the gradient
of the smoothed objective:
\begin{equation}\label{Eq:UnbiasedEstimator}
\mathbb{E}_{\boldsymbol{u}}\!\big[\hat{\nabla} f_{l,i}(\boldsymbol{\theta}_{l};\xi_i)\big]
= \nabla f_{l,i}^\mu(\boldsymbol{\theta}_{l};\xi_i).
\end{equation}
The bias with respect to $\nabla f_{l,i}$ is solely due to smoothing and is controlled by $\mu$.

\section{Proposed Algorithm: HERON-SFL}
\begin{figure}
  \centering
  \includegraphics[width=0.65\linewidth]{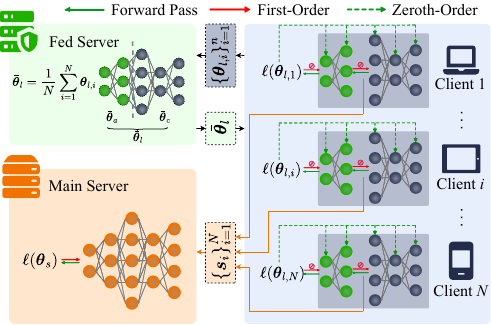}
  \caption{The proposed HERON-SFL algorithm. }
  \label{fig:HERON_SFL_system}
\end{figure}
We present the end-to-end training process of our proposed framework, which operates over a series of communication rounds with synchronized aggregation (high-level illustration depicted in Fig.~\ref{fig:HERON_SFL_system}).
Each round, indexed by $t$, encompasses four key stages: model initialization, local client computation, server-side updates, and local model aggregation in Fed-Server. 
The entire process is formalized as follows:

\textbf{1. Model Initialization.} 
At the start of the $t$-th communication round, the Fed-Server broadcasts the global model parameters  ${\boldsymbol{\theta}}_{c}^{t}$ and ${\boldsymbol{\theta}}_{a}^{t}$ that are resulted from the federated aggregation at the end of the last round. Upon receiving these parameters, each client $i$ initializes its local models for the subsequent update process: $\boldsymbol{\theta}_{l,i}^{t,0}=\{\boldsymbol{\theta}_{c,i}^{t,0},\boldsymbol{\theta}_{a,i}^{t,0}\} = \{{\boldsymbol{\theta}}_{c}^{t},{\boldsymbol{\theta}}_{a}^{t}\}$.

\textbf{2. Local Model Update and Smashed Data Upload.}
The client then proceeds with $h$ local model updates. During this process, the update of the client-side model is {decoupled} from the server-side model by leveraging an auxiliary network. 
Distinct from existing methods, our paradigm employs a {ZO gradient estimator} (defined in Eq.~\eqref{Eq:ZerothOrderGradientEstimator}) to approximate the gradients of a local loss function. This allows the client to perform timely updates without requiring traditional back-propagation from the server. 
After performing $h$ local gradient descent steps, the cumulative update for the client-side models can be concisely written as:
\begin{equation}\label{Eq:ClientLocalUpdate}
    \boldsymbol{\theta}_{l,i}^{t,h} = \boldsymbol{\theta}_{l,i}^{t,0} - \eta_l\sum\nolimits_{m=1}^{h} \hat{\nabla} f_{l,i}(\boldsymbol\theta_{l,i}^{t,m};\xi_i).
\end{equation}
During the local update phase, the client uploads its smashed data to the server every $k$ local steps for the subsequent server-side training phase.

\textbf{3. Server Model Update.}
The server receives the smashed data from each client $i$ and performs model updates sequentially using an SFLV2 \cite{thapa2022splitfed} training scheme. In this setting, each client’s smashed data is processed one-by-one, and standard first-order optimization based on forward and backward propagation is used to estimate gradients and update the server-side model parameters $\boldsymbol\theta_s^t$ accordingly:
\begin{equation}\label{Eq:ServerUpdate}
    \boldsymbol{\theta}_s^{t+1} = \boldsymbol{\theta}_s^t - \eta_s \sum\nolimits_{i=1}^N \frac{1}{\vert \mathcal D_i\vert}\sum\nolimits_{\xi_i\in\mathcal D_i} \nabla \ell(\boldsymbol\theta_s^t; {\boldsymbol\theta_{c,i}^t}(\xi_i)),
\end{equation} 
where $\nabla_{\boldsymbol\theta_s} \mathit l(\boldsymbol\theta_s^t; {\boldsymbol\theta_{c,{i}}^t}(\xi_i))$ is the real gradient of the server-side loss function using back propagation.

\textbf{4. Model Aggregation in Fed-Server.}
Upon completion of the $h$ local updates, each client transmits its updated local parameters $\boldsymbol\theta_{l,i}^{t,h}$ to the Fed-Server for aggregation. The Fed-Server averages these parameters across all $N$ clients to compute the global model combined by client-side and auxiliary models for the next round:
\begin{equation}\label{Eq:Aggregation}
     \boldsymbol{\theta}_{l}^{t+1}=\bar{\boldsymbol{\theta}}_l^t = \frac{1}{N} \sum\nolimits_{i=1}^{N} \boldsymbol{\theta}_{l,i}^{t,h} 
\end{equation}
The server-side model, $\boldsymbol\theta_s^{t+1}$, which was updated sequentially during the round, is already finalized and requires no aggregation. Finally, the new global model $\boldsymbol\theta_g^{t+1} = \{\boldsymbol\theta_{c}^{t+1}, \boldsymbol\theta_s^{t+1}\}$ is assembled and prepared for distribution in the subsequent communication round.

In essence, HERON-SFL uses a client-side ZO gradient estimator to eliminate backpropagation/activation caching while keeping the same smashed-activation upload schedule as standard decoupled SFL, and thus incurs no additional communication overhead beyond the auxiliary-network-based SFL protocol(e.g., CSE-FSL\cite{mu2025federated} and FSL-SAGE\cite{nair2025fsl}).

\section{Theoretical Analysis}

\subsection{Convergence Analysis}

In this section, we provide a formal convergence analysis to establish the theoretical guarantees for the proposed FSL-HERON framework. The theoretical framework is built upon the following standard assumptions, which are widely adopted in the analysis of distributed optimization algorithms \cite{karimireddy2020scaffold, reddi2020adaptive, mu2025federated, fang2022communication}.

\begin{assumption}[\textbf{L-smoothness}]\label{assumption:smoothness}
\textit{The loss functions of clients and server are $L$-smooth. Mathematically, for any $\boldsymbol{x}\in\mathbb{R}^d$ and $\boldsymbol{y}\in\mathbb{R}^d$, the following holds:
\begin{equation}
\begin{aligned}
    \|\nabla f(\boldsymbol{x}) - \nabla f(\boldsymbol{y})\| \leq L \|\boldsymbol{x} - \boldsymbol{y}\|, \quad f(\boldsymbol{y}) \leq f(\boldsymbol{x}) + \nabla f(\boldsymbol{x})^T (\boldsymbol{y} - \boldsymbol{x}) + \frac{L}{2} \|\boldsymbol{y} - \boldsymbol{x}\|^2,
\end{aligned}
\end{equation}
where $f$ is the loss function, and $L$ is the Lipschitz constant. }
\end{assumption}
\begin{assumption}[\textbf{Bounded gradients}]\label{assumption:bounded_gradients}
\textit{The gradients of the local loss function $\ell_i(\boldsymbol\theta_c, \boldsymbol\theta_s)$ are bounded, i.e., there exists a constant $G$ such that: 
\begin{equation}
    \|\nabla_{\boldsymbol\theta_c} \ell_i(\boldsymbol\theta_c)\|^2 \leq G_c^2,
    \|\nabla_{\boldsymbol\theta_s} \ell_i(\boldsymbol\theta_s)\|^2 \leq G_s^2.
\end{equation}}
\end{assumption}
\begin{assumption}[\textbf{Bounded variance for ZO estimator}]\label{assumption:bounded_variance}
\textit{The variance of the zeroth-order gradient estimator is bounded, i.e., there exists a constant $\sigma^2$ such that:
\begin{equation}
    \mathbb{E}[\|\hat{\boldsymbol{g}}_{c,i}^{t,m} - \nabla_{\boldsymbol\theta_c} f_i(\boldsymbol\theta_c, \boldsymbol\theta_s)\|^2] \leq \sigma^2.    
\end{equation}}
\end{assumption}

\begin{assumption}[\textbf{Uniformly bounded drift of client sub-model}]\label{assumption:distribution_drift}\textit{
    For each client \(i\) at global round \(t\), let $z_{c,i}^t = g_{x_{c,i}^t,h}(z)$ be the output of the \(i\)-th client-side model (with input determined by \(x_{c,i}^t\) and \(\mathcal{D}_i\)), and denote by $P_{c,i}^t(z)$ its output distribution. Let \(P^\ast_{c,i}(z)\) be the reference (time-invariant) output distribution of the \(i\)-th client-side model evaluated at \(x_c^\ast\) and \(\mathcal{D}_i\). Define the distribution distance
\begin{equation}
    d_{c,i}^t := \int_{\mathcal{Z}} \big|\,P_{c,i}^t(z) - P^\ast_{c,i}(z)\,\big|\,dz,
\end{equation}
i.e. the \(L_1\) (total-variation) distance between \(P_{c,i}^t\) and \(P^\ast_{c,i}\).
We assume that the aggregate drift across clients is uniformly bounded as follows: there exists a finite $\delta$ such that
\begin{equation}
    \frac1T\sum\nolimits_{t=1}^T\sum\nolimits_{i=1}^N d_{c,i}^t \le \delta. 
\end{equation}}
\end{assumption}

\begin{remark}
    Together, the Assumptions above ensure a well-behaved optimization environment.  Assumption~\ref{assumption:smoothness} guarantees Lipschitz-continuous gradients and provides the usual quadratic upper bound used in descent arguments; Assumption~\ref{assumption:bounded_gradients} prevents arbitrarily large client/server updates and thus promotes numerical stability; and Assumption~\ref{assumption:bounded_variance} limits the stochastic error between the estimator and the true gradient.
    Assumption~\ref{assumption:distribution_drift} is tailored to the auxiliary-network-assisted FSL setting, as also adopted in \cite{mu2025federated} and motivated by centralized synthetic-gradient frameworks~\cite{belilovsky2020decoupled}. 
    This condition is essential for guaranteeing the stability and convergence of the SFL process under local gradient updates.
\end{remark}

\begin{theorem}[\textbf{Convergence rate of HERON-SFL in the i.i.d. setting}]\label{theorem:convergence_rate_scratch}
Under {Assumptions}~\ref{assumption:smoothness}--\ref{assumption:distribution_drift}, 
if the client learning rate satisfies $\eta_c \leq \{\frac{1}{3Lh}, \frac{2}{NLh^2}, \frac{N}{72L}\}$, 
and is chosen as $\eta_c=\mathcal{O}(\sqrt{{({NB)}/{(dhT)}}})$ while the server learning rate is set to $\eta_s = \mathcal{O}(\sqrt{{{(hB)}/{(dNT)}}})$, and perturbation step size is set to $\mu=\mathcal{O}(1/(dhNBT)^{{1}/{4}})$,
the convergence rate of the HERON-SFl algorithm satisfies:
    \begin{equation}
    \begin{aligned}
        \min_{t\in[T]}\ \mathbb{E}\big[\Vert\nabla f(\boldsymbol{\theta}^t_\text{g})\Vert^2\big]
        \leq
        \mathcal{O}\big(\sqrt{\frac{d}{hNBT}}\big) + \mathcal{O}\big(\sqrt{\frac{1}{dhNBT}}\big).
    \end{aligned}
    \end{equation}
\end{theorem}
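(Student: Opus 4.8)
The plan is to run a standard descent-lemma argument on the global objective $f(\boldsymbol{\theta}_g^t)$, but carefully separating the server-side contribution (first-order, standard) from the client-side contribution (zeroth-order on the composite local model $\boldsymbol{\theta}_{l,i}$). I would first decompose the one-round change $f(\boldsymbol{\theta}_g^{t+1}) - f(\boldsymbol{\theta}_g^t)$ using $L$-smoothness (Assumption~\ref{assumption:smoothness}) applied jointly to $(\boldsymbol{\theta}_c,\boldsymbol{\theta}_s)$, producing an inner-product ``descent'' term against the true gradient plus a quadratic term in the update magnitudes. For the server block, the update in \eqref{Eq:ServerUpdate} is an ordinary FO SGD step, so its analysis follows the usual route: the inner product yields $-\eta_s\|\nabla_{\boldsymbol{\theta}_s} f\|^2$ up to cross terms, and the quadratic term is bounded via Assumption~\ref{assumption:bounded_gradients} ($G_s^2$) and $L$-smoothness. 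For the client block, I would expand the $h$-step aggregated update \eqref{Eq:ClientLocalUpdate}, use the unbiasedness relation \eqref{Eq:UnbiasedEstimator} to replace $\mathbb{E}[\hat{\nabla} f_{l,i}]$ by $\nabla f_{l,i}^\mu$, then convert $\nabla f_{l,i}^\mu$ back to $\nabla f_{l,i}$ incurring an $\mathcal{O}(\mu L \sqrt{d})$ (or $\mathcal{O}(\mu^2 L^2 d)$ in squared form) smoothing-bias term from the standard Gaussian-smoothing estimates of \cite{nesterov2017random}.

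The second ingredient is handling three nested sources of error: (i) the client drift over the $h$ local steps, bounded by the usual telescoping argument that $\mathbb{E}\|\boldsymbol{\theta}_{l,i}^{t,m}-\boldsymbol{\theta}_{l,i}^{t,0}\|^2 = \mathcal{O}(\eta_c^2 h^2 (G_c^2 + \sigma^2/B + \mu^2 d L^2))$ using Assumptions~\ref{assumption:bounded_gradients} and \ref{assumption:bounded_variance}; (ii) the ZO variance, which contributes a term proportional to $\eta_c^2 h \, d\,\sigma^2 / B$ after accounting for the $d$-dependence of the spherical estimator's second moment and the mini-batch averaging by $B$; and (iii) the mismatch between what the client optimizes (the local predictor loss through $\boldsymbol{\theta}_{a,i}$) and the true global objective's client-gradient $\nabla_{\boldsymbol{\theta}_c} f$ — this is exactly where Assumption~\ref{assumption:distribution_drift} enters, bounding the aggregate discrepancy by $\delta$ after summing over $t$. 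I would keep the dimension factor $d$ explicit throughout the client terms, since the whole point is that it will be absorbed by the learning-rate schedule.

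After assembling the per-round bound, I would sum over $t=1,\dots,T$, telescope the $f(\boldsymbol{\theta}_g^t)$ terms against $f(\boldsymbol{\theta}_g^1)-f^\star$, and divide by $T$ and by the coefficient of $\sum_t \mathbb{E}\|\nabla f(\boldsymbol{\theta}_g^t)\|^2$. This yields a bound of the schematic form $\frac{1}{\eta_c h T}(f^1 - f^\star) + \eta_c h\, d\,\sigma^2/B + \eta_s\, d\, G_s^2/N + \eta_c^2 h^2 L^2(\cdots) + \mu^2 d L^2 + \delta/T$. Plugging in $\eta_c = \mathcal{O}(\sqrt{NB/(dhT)})$, $\eta_s = \mathcal{O}(\sqrt{hB/(dNT)})$, and $\mu = \mathcal{O}((dhNBT)^{-1/4})$ balances the dominant first two terms to $\mathcal{O}(\sqrt{d/(hNBT)})$, makes the $\eta_c^2$ and $\mu^2$ terms higher order, and the smoothing/perturbation residual collapses to the stated $\mathcal{O}(\sqrt{1/(dhNBT)})$; the stepsize constraints $\eta_c \le \min\{\frac{1}{3Lh}, \frac{2}{NLh^2}, \frac{N}{72L}\}$ are precisely what is needed to keep the quadratic terms dominated and the descent coefficient positive.

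The main obstacle I anticipate is step (iii): rigorously relating the client's ZO update direction on $f_{l,i}$ (which passes through the auxiliary head $\boldsymbol{\theta}_{a,i}$, not the server) to the true gradient block $\nabla_{\boldsymbol{\theta}_c} f(\boldsymbol{\theta}_g)$, and showing that the accumulated ``gradient-surrogate'' error is controlled by the total-variation drift $\delta$ of Assumption~\ref{assumption:distribution_drift} rather than blowing up. Getting the $d$-dependence to appear only in the combination $\eta_c^2 h\, d\,\sigma^2/B$ (and not, say, as $d^2$) requires care in bounding $\mathbb{E}\|\hat{\nabla} f_{l,i}\|^2$ — the spherical/Gaussian estimator's second moment scales like $d$ times a gradient-norm term plus a $\mu^2 d^2 L^2$ smoothing term — and then ensuring the learning-rate choice neutralizes it. The server-client coupling in the cross terms (from splitting $\nabla f$ into $\nabla_{\boldsymbol{\theta}_c} f$ and $\nabla_{\boldsymbol{\theta}_s} f$) is routine by Young's inequality but must be tracked to avoid spurious constants in the stepsize conditions.
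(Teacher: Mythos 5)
Your overall skeleton---descent lemma on $f(\boldsymbol{\theta}_g)$, separation into client and server contributions, a ZO second-moment bound with explicit $d$-dependence (of the form $dG_c^2/B + d^2L^2\mu^2/B+\cdots$), a client-drift bound over the $h$ local steps, telescoping over $t$, and balancing with the prescribed $\eta_c,\eta_s,\mu$---is the same route the paper takes. The genuine gap is in your step (iii) and, correspondingly, in your server treatment. You plan to use Assumption~\ref{assumption:distribution_drift} (the total-variation drift of the cut-layer output distributions) to control the mismatch between the client's ZO direction on the auxiliary-head loss $f_{l,i}$ and the true block gradient $\nabla_{\boldsymbol{\theta}_c} f$. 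As stated this would not go through: the $L_1$ distance between the client submodel's output distribution at round $t$ and a fixed reference distribution carries no information about the discrepancy between the gradient obtained through the small auxiliary head and the gradient obtained through the full server model, and summing over $t$ does not convert the drift budget $\delta$ into a bound on that surrogate-gradient error. The paper never has to make this argument, because Assumption~\ref{assumption:bounded_variance} is stated directly against the true client-block gradient, i.e.\ $\mathbb{E}\big[\|\hat{\boldsymbol{g}}_{c,i}^{t,m}-\nabla_{\boldsymbol{\theta}_c} f_i(\boldsymbol{\theta}_c,\boldsymbol{\theta}_s)\|^2\big]\le\sigma^2$; the auxiliary-surrogate error is absorbed into $\sigma^2$ by assumption, so the client side only needs the unbiasedness/smoothing bound, the second-moment lemma, and the drift-of-iterates lemma.

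Assumption~\ref{assumption:distribution_drift} is instead consumed on the server side, which you dismiss as ``ordinary FO SGD.'' It is not: the server updates $\boldsymbol{\theta}_s$ on smashed data produced by client submodels that change every round, so the server's stochastic gradient is biased relative to the reference distribution $P^\ast_{c,i}$. Following the CSE-FSL analysis, the paper converts this bias into an additive term $\eta_s G_s^2\sum_i d_{c,i}^t$ using the bounded-gradient assumption (and requires $\eta_s\le \frac{1}{NL}$ to keep the quadratic server term non-positive), after which Assumption~\ref{assumption:distribution_drift} bounds the accumulated drift over $T$ rounds (it vanishes in the i.i.d.\ case). With your allocation of assumptions the server bias is unaccounted for and the client-side step is unprovable; with the paper's allocation, the rest of your outline (coupled step sizes $\eta=\eta_c h/4=\eta_s(2N-1)/4$, plug-in of $\mu=\mathcal{O}((dhNBT)^{-1/4})$) assembles into the stated $\mathcal{O}(\sqrt{d/(hNBT)})+\mathcal{O}(\sqrt{1/(dhNBT)})$ rate.
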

\begin{remark}
The convergence bound on the expected gradient norm indicates that the algorithm can achieve a favorable trade-off between the model complexity (characterized by the dimensionality $d$) and the training batchsize (captured by $B$) over the training horizon $T$. 
The bound is dominated by $\mathcal{O}\!(\sqrt{d/(hNBT)})$ (the second term is smaller by $1/\sqrt{d}$). 
{Equivalently, in terms of the communication complexity, achieving an $\varepsilon$-stationary point requires a total of $T = \mathcal{O}\!\big(d/(hNB\varepsilon^2)\big)$ communication rounds. This implies that the algorithm achieves a \textit{linear speedup} with respect to the number of clients $N$ and the local batch size $B$ (i.e., the required rounds scale inversely with the product $NB$).}
Furthermore, increasing the number of local steps $h$ reduces the required communication rounds by a factor of $h$, effectively trading fewer communication rounds for more intensive local computation.
The dependence on model size is $\sqrt{d}$ (or $d$ in sample complexity), which is the drawback of ZO optimization: convergence degrades with increasing dimensionality. Below, we show that the dependency on $d$ can be reduced under structural assumptions on an effective dimension. 
\end{remark}

\begin{assumption}
[\textbf{Low $\kappa$-Effective Rank}\cite{malladi2023fine,he2016deep,li2024achieving}
]\label{assumption:low_rank}
\textit{Let $G_t \triangleq \max_{i, \xi_i \subset \mathcal{D}_i} \|\nabla_{\boldsymbol\theta_l} l_l(\theta_{l,i}^t; \xi_i)\|$. There exists a Hessian matrix $H_l(\theta_{l,i}^t) \preceq L \cdot I_{d_l}$ such that:
\begin{itemize}[leftmargin=10pt]
    \item For all $\boldsymbol\theta_l$ such that $\|\boldsymbol\theta_l - \theta_{l,i}^t\| \le 2\eta_c d_l G_t$, we have $\nabla^2 l_l(\boldsymbol\theta_l) \preceq H_l(\boldsymbol\theta_{l,i}^t)$.
    \item The effective rank of $H_l(\boldsymbol\theta_{l,i}^t)$, i.e., $\frac{\text{tr}(H_l(\boldsymbol\theta_{l,i}^t))}{\|H_l(\boldsymbol\theta_{l,i}^t)\|_2}$, is at most $\kappa$.
\end{itemize}}
\end{assumption}
{{The low effective rank assumption posits that, although the client-side model may be high-dimensional, its local loss landscape is governed by only a few dominant curvature directions. This phenomenon is widely observed in the training dynamics of deep models \cite{malladi2023fine, li2024achieving}; detailed discussion and empirical evidence that supports this assumption in practice} are provided in Appendix~\ref{App:low-rank}.}  

\begin{theorem}[\textbf{Convergence rate of HERON-SFL with Low Effective Rank Assumption}]\label{theorem:low_rank}
Under {Assumptions}~\ref{assumption:smoothness}--\ref{assumption:low_rank},
if the client learning rate satisfies $\eta_c \leq \frac{1}{4L}(1+\frac{d\kappa+d-2}{d+2})$ and $\mu\leq\frac{\sqrt[4]{\kappa}}{\sqrt[4]{NT}\,\sqrt{(d+3)^3}}$, 
and is chosen as $\eta_c=\mathcal{O}(\sqrt{{({NB)}/{(\kappa T)}}})$ while the server learning rate is set to $\eta_s = \mathcal{O}(\sqrt{B/{(\kappa NT)}})$, the convergence rate of the HERON-SFL algorithm satisfies:
\begin{equation}
\begin{aligned}
    \min_{t\in[T]}\mathbb{E}\big[\Vert\nabla f(\boldsymbol{\theta}^t_\text{g})\Vert^2\big]  \leq \mathcal{O}\big(\sqrt{\frac{\kappa}{NBT}}\big) + \mathcal{O}\big(\frac{1}{T}\big) + \frac{2}{\delta}\!\big[\frac{2G_s^2}{N(2N-1)}\Delta+\frac{\mu^2L^2}{2}(d+3)^3\big].
\end{aligned}
\end{equation}
\end{theorem}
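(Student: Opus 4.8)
The plan is to carry out the same one-round descent analysis on the global loss $f(\boldsymbol{\theta}_g^t)$ that underlies Theorem~\ref{theorem:convergence_rate_scratch}, but to replace the dimension-dependent second-moment bound for the zeroth-order estimator by one that scales with the effective rank $\kappa$, which is precisely what Assumption~\ref{assumption:low_rank} provides (this is the ZO analogue of the effective-rank-based rates in \cite{malladi2023fine,li2024achieving}). First I would apply the quadratic upper bound of Assumption~\ref{assumption:smoothness} to $f(\boldsymbol{\theta}_g^{t+1})$ relative to $f(\boldsymbol{\theta}_g^t)$, splitting the displacement into the client-side part $\boldsymbol{\theta}_c^{t+1}-\boldsymbol{\theta}_c^t$ (the $h$ accumulated ZO steps of \eqref{Eq:ClientLocalUpdate}, averaged over clients via \eqref{Eq:Aggregation}) and the server-side part $\boldsymbol{\theta}_s^{t+1}-\boldsymbol{\theta}_s^t$ (the first-order update \eqref{Eq:ServerUpdate}). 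This produces, for each side, a first-order inner-product term and a second-order squared-norm term, and the analysis then proceeds by controlling these four contributions in expectation.

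The crux is the client second-order term. For the first-order term I would use unbiasedness of the two-point estimator with respect to the smoothed surrogate, $\mathbb{E}_{\boldsymbol u}[\hat{\nabla} f_{l,i}]=\nabla f^\mu_{l,i}$ from \eqref{Eq:UnbiasedEstimator}, then pay an $\mathcal{O}(\mu L (d+3)^{3/2})$ smoothing bias to pass to $\nabla f_{l,i}$, and invoke Assumption~\ref{assumption:distribution_drift} to bridge from the decoupled local/auxiliary objective $f_{l,i}$ to the true global loss $f$ --- this is where the factors $\delta$, $G_s$ and the initial-gap quantity $\Delta$ enter. For the second-order term, rather than the classical estimate $\mathbb{E}\|\hat{\nabla} f_{l,i}\|^2 \lesssim d\,\|\nabla f_{l,i}\|^2 + \mu^2 L^2 d^3$ used in Theorem~\ref{theorem:convergence_rate_scratch}, I would expand $\ell_l(\boldsymbol{\theta}_l+\mu\boldsymbol u)-\ell_l(\boldsymbol{\theta}_l)$ to second order and use Assumption~\ref{assumption:low_rank}: since $\nabla^2\ell_l \preceq H_l$ with $\mathrm{tr}(H_l)/\|H_l\|_2 \le \kappa$ throughout the ball $\|\boldsymbol{\theta}_l-\boldsymbol{\theta}_{l,i}^t\|\le 2\eta_c d_l G_t$, the unit-sphere moment identities reveal that the perturbation is effectively $\kappa$-dimensional, yielding $\mathbb{E}\|\hat{\nabla} f_{l,i}\|^2 \lesssim \kappa\,\|\nabla f_{l,i}\|^2 + \mu^2 L^2 (d+3)^3$.

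Next I would handle the $h$-step accumulation: bound the inner drift of $\boldsymbol{\theta}_{l,i}^{t,m}$ from $\boldsymbol{\theta}_{l,i}^{t,0}$ via Assumption~\ref{assumption:bounded_gradients} and the step-size cap $\eta_c \le \frac{1}{4L}(1+\frac{d\kappa+d-2}{d+2})$, so that $L$-smoothness permits replacing each $\nabla f_{l,i}(\boldsymbol{\theta}_{l,i}^{t,m})$ by $\nabla f_{l,i}(\boldsymbol{\theta}_{l,i}^{t,0})$ up to an $\mathcal{O}(L^2\eta_c^2 h^2)$ remainder, which becomes the $\mathcal{O}(1/T)$ term after tuning. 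The server side is the familiar SFLV2 first-order argument: \eqref{Eq:ServerUpdate} is a descent step on $\boldsymbol{\theta}_s$, giving a $-\eta_s\|\nabla_{\boldsymbol{\theta}_s} f\|^2$ decrease plus an $\mathcal{O}(L\eta_s^2 N G_s^2)$ remainder, with the sequential processing of the $N$ clients producing the $\frac{1}{N(2N-1)}$-type factor. I would then telescope the per-round inequality over $t=1,\dots,T$, divide by $T$, lower-bound the average by $\min_t\mathbb{E}\|\nabla f(\boldsymbol{\theta}_g^t)\|^2$, and substitute $\eta_c=\mathcal{O}(\sqrt{NB/(\kappa T)})$, $\eta_s=\mathcal{O}(\sqrt{B/(\kappa NT)})$, $\mu\le \kappa^{1/4}/(\sqrt[4]{NT}\sqrt{(d+3)^3})$: the initial-gap term yields the leading $\mathcal{O}(\sqrt{\kappa/(NBT)})$, the accumulation remainder yields $\mathcal{O}(1/T)$, and the residual server-drift and smoothing contributions are absorbed into the final term $\frac{2}{\delta}[\frac{2G_s^2}{N(2N-1)}\Delta+\frac{\mu^2 L^2}{2}(d+3)^3]$.

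The main obstacle is the $\kappa$-dependent second-moment bound in the second step. It requires (i) verifying that the iterates remain inside the neighborhood where $\nabla^2\ell_l \preceq H_l$ holds, so that the second-order Taylor expansion of $\ell_l(\boldsymbol{\theta}_l+\mu\boldsymbol u)-\ell_l(\boldsymbol{\theta}_l)$ is legitimate --- which is exactly why the radius in Assumption~\ref{assumption:low_rank} is tied to $\eta_c d_l G_t$; (ii) evaluating the relevant unit-sphere moments, such as $\mathbb{E}[(\boldsymbol u^\top H_l \boldsymbol u)^2]$, $\mathbb{E}[\|\boldsymbol u\|^2(\boldsymbol u^\top \nabla f_{l,i})^2]$ and the cross terms, and showing they are governed by $\mathrm{tr}(H_l)\le\kappa\|H_l\|_2\le\kappa L$ rather than by $d$; and (iii) threading this improved bound through the mini-batch averaging in \eqref{Eq:ZerothOrderGradientEstimator} and the $h$-step accumulation without reintroducing a hidden factor of $d$. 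A secondary subtlety is maintaining consistency between the auxiliary-network-to-global reduction (via Assumption~\ref{assumption:distribution_drift}) and Assumption~\ref{assumption:low_rank}, which is stated for the local loss $\ell_l$ rather than for $f$ directly.
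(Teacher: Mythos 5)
Your overall strategy coincides with the paper's: the same one-round descent decomposition of $f(\boldsymbol{\theta}_g^{t+1})-f(\boldsymbol{\theta}_g^t)$ into a client-side ZO contribution and a server-side FO contribution, an effective-rank-controlled bound for the client term, the server bound carrying the drift quantities, then telescoping and the same step-size/smoothing choices. The substantive difference is how the key client-side inequality is obtained. The paper does not derive the $\kappa$-dependent estimate itself: its Lemma~\ref{lemma:li2024} is imported essentially verbatim from the analysis of \cite{li2024achieving} (their Eq.~(69)), and the only new work is a Young's-inequality step that absorbs the linear $\|\nabla f_c(\boldsymbol{\theta}_c^t)\|$ term into the negative quadratic, giving the $-\tfrac{\eta_c}{4}\|\nabla f_c(\boldsymbol{\theta}_c^t)\|^2$ coefficient and the $\eta_c^3L^2\rho^2$ remainder that later becomes the $\mathcal{O}(1/T)$ term. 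You instead plan to prove the effective-rank second-moment bound from scratch via a second-order Taylor expansion and unit-sphere moment identities; that is precisely the content of the cited result, so your route is sound but re-derives what the paper outsources, buying self-containedness at the cost of the sphere-moment computations you correctly flag as the main obstacle. Two smaller discrepancies in your accounting: in the paper the drift terms $d_{c,i}^t$ (and hence $\Delta$, which is a drift quantity, not an initial-gap quantity) enter exclusively through the \emph{server-side} bound inherited from \cite{mu2025federated}, not through a client-side bridge from $f_{l,i}$ to $f$ as you describe; and the paper's low-rank lemma carries no explicit $h$-step accumulation bookkeeping (its $\mathcal{O}(1/T)$ term arises from the $\eta^2$-order terms rather than an $h$-drift remainder), whereas you handle the local-step drift explicitly. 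Neither changes the final rate, but they show your placement of Assumption~\ref{assumption:distribution_drift} and of the local-step analysis differs from where the paper actually uses them.
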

\begin{remark}
With the prescribed $\mu$, the smoothing bias term $\propto \mu^2(d+3)^3$ is at most $\mathcal{O}\big(\sqrt{\kappa/(NT)}\big)$ and the drift term vanishes in the i.i.d.\ case ($\Delta=0$). The bound thus simplifies to $\mathcal{O}\big(\sqrt{\kappa/(NBT)}\big)+\mathcal{O}(1/T)$, which is independent of the model dimension $d$, removing the usual $\sqrt{d}$ degradation of ZO methods and matching the $1/\sqrt{T}$ rate of FO SFL  \cite{mu2025federated, nair2025fsl} up to condition number $\kappa$ factors. 
\end{remark}

\begin{table}[h!]
\caption{Client-Side Resource Costs per Local Update.}
\centering
\setlength{\tabcolsep}{5pt}
\label{tab:resource_comparison}
\renewcommand{\arraystretch}{1.5} %
\begin{tabular}{l|c|c|c}
\hline
\textbf{Method} & \textbf{Comms. per Client} & \textbf{Peak Memory} & \textbf{FLOPs} \\
\hline
{SFLV1/V2} & $2pq + 2\vert\boldsymbol{\theta}_c\vert$ & $\mathcal{O}(\vert\boldsymbol{\theta}_c\vert)$ & $3 F_c$ \\\hline
{CSE-FSL} & $pq + 2(\vert\boldsymbol{\theta}_c\vert+\vert\boldsymbol{\theta}_a\vert)$ & $\mathcal{O}(\vert\boldsymbol{\theta}_c\vert+\vert\boldsymbol{\theta}_a\vert)$ & $3 (F_c + F_a)$ \\\hline
{HERON-SFL} & $pq + 2(\vert\boldsymbol{\theta}_c\vert+\vert\boldsymbol{\theta}_a\vert)$ & $\mathcal{O}(1)$ & $n_p(F_c + F_a)$ \\
\hline
\end{tabular}
\end{table}
\subsection{Client-side Resource Cost Analysis}
The following analysis, summarized in Table \ref{tab:resource_comparison}, compares the per-client resource consumption for a single parameter update step on a fixed-size batch of data, assuming all other hyperparameters are kept constant. Let $p$ be the data size of one local batch, $q$ be the size of the smashed layer, and $\vert\boldsymbol{\theta}_c\vert$, $\vert\boldsymbol{\theta}_a\vert$ be the size of the client-side and auxiliary models, respectively.

\subsubsection{Communication Load}
The key communication advantage of decoupled frameworks (CSE-FSL, FSL-SAGE, and HERON-SFL) over traditional SFL (SFLV1/V2) is the removal of server-to-client gradient downloads. In traditional SFL, each batch requires a two-way exchange of intermediate data (captured by $2pq$), whereas decoupled methods only upload smashed data, reducing the per-batch cost to $pq$. This saving comes at the cost of exchanging auxiliary-model parameters, $\vert\boldsymbol{\theta}_a\vert$, which is typically minor compared to transmitting smashed data. Importantly, HERON-SFL’s ZO local updates are fully performed on-device and do not introduce any additional intermediate-data transmissions; thus, \textit{HERON-SFL incurs no extra communication overhead beyond the standard decoupled SFL protocol}.

\subsubsection{Peak Memory}
FO frameworks like SFLV1/V2 and CSE-FSL require caching intermediate activations for backpropagation. This results in a peak memory footprint that scales with the size of the locally trained models, i.e., $\mathcal O(\vert\boldsymbol{\theta}_c\vert)$ and $\mathcal O(\vert\boldsymbol{\theta}_c\vert+\vert\boldsymbol{\theta}_a\vert)$ respectively. This overhead can be an order of magnitude larger than that of inference \cite{griewank2008evaluating}. 
In contrast, ZO-based HERON-SFL eliminates activation caching and reduces peak memory to $\mathcal{O}(1)$, matching inference cost \cite{malladi2023fine}.

\begin{remark}
    Local ZO updates are highly memory-efficient for two primary reasons. First, they eliminate the need for backpropagation, thus avoiding the high cost of caching intermediate activations. Second, the perturbed parameters $\boldsymbol{u}$ generated in the calculation $\hat{\nabla} f_{l,i}(\boldsymbol\theta_{l};\xi_i))$ do not require storing the full perturbation vector; instead, the vector can be procedurally generated from a single random seed and applied in-place, further minimizing memory overhead. 
\end{remark}

\subsubsection{FLOPs}
Assuming a backward pass is twice as computationally expensive as a forward pass ($F$) \cite{chen2016training}, first-order methods incur a cost of approximately $3F_c$ (for SFLV1/V2) or $3(F_c + F_a)$ (for CSE-SFL and FSL-SAGE) per update, where $F_c$ and $F_a$ are the forward pass costs of the client and auxiliary models, respectively. In contrast, HERON-SFL performs ZO updates directly on the client, similar to the approach in MeZO \cite{malladi2023fine}. In practice, a standard two-point ZO estimator is typically sufficient for stable and effective parameter updates, requiring a computational cost of $2(F_c + F_a)$ in HERON-SFL.

\section{Experiments}
\subsection{Experiment Setting}In this section, we conduct experiments on both model training and fine-tuning to show the performance of our proposed HERON-SFL algorithm. 
For comparison, we use the following baseline methods: 
SFLV1/V2~\cite{thapa2022splitfed} or SplitLoRA\footnote{While SFLV1/V2 are designed for the training-from-scratch paradigm, our focus on the distinct task of language fine-tuning led to the development of SplitLoRA, which integrates LoRA with the SFLV2 framework. We omit a comparison with an SFLV1-based approach because its need for multiple server models is computationally prohibitive for large-scale models.}~\cite{lin2024splitlora}, {CSE-FSL} \cite{mu2025federated}, and {FSL-SAGE}~\cite{nair2025fsl}. We conduct the experiments under two complementary training paradigms, implementing all models in PyTorch and running them on NVIDIA RTX A6000 NVL GPU (48 GB):

\textbf{Full Training from Scratch}. We study the convergence of ResNet-18 \cite{he2016deep} under SFL on CIFAR-10 \cite{krizhevsky2009learning} with 5 clients. 
The model is split after the second 2-D BatchNorm layer; the client holds the front part while the server holds the back part. 
An auxiliary head consisting of a single fully connected layer is attached to the cut layer. 
Unless otherwise stated, we adopt the hyperparameters in \cite{thapa2022splitfed}: batch size 256 and Adam optimizers on both sides with a learning rate of $1e{-4}$.

\textbf{Language Model Fine-tuning}. We fine-tune GPT2-Small and GPT2-Medium \cite{radford2021learning} on the E2E dataset \cite{novikova2017e2e} with 3 clients. Unless specified otherwise, for GPT2-Small, the model is split after the third transformer block, with an auxiliary network consisting of one transformer block and the unembedding layer. For GPT2-Medium, the split occurs after the sixth block, with a three-block auxiliary network plus the unembedding layer. As the auxiliary network is not pre-trained, we initialize its parameters by copying the weights from the initial blocks of the server-side model. All components are fine-tuned using Low-Rank Adaptation (LoRA) \cite{hu2022lora}, where only adapters of rank 8 are updated and all other parameters are frozen.

The former setting evaluates whether SFL can train a model \emph{from scratch}, a prerequisite when no reliable checkpoint exists. The latter mirrors the prevailing industrial practice of pre-training a large language model once and then adapting it with parameter- and memory-efficient techniques such as LoRA. 
By examining both regimes, we separately measure the contributions of data-parallel federation, model partitioning, and parameter-efficient adapters, and we show that {HERON-SFL} consistently outperforms strong baselines in both scenarios.

\subsection{Training from Scratch: ResNet18 on CIFAR-10}
\begin{figure}
    \centering
    \includegraphics[width=0.85\linewidth]{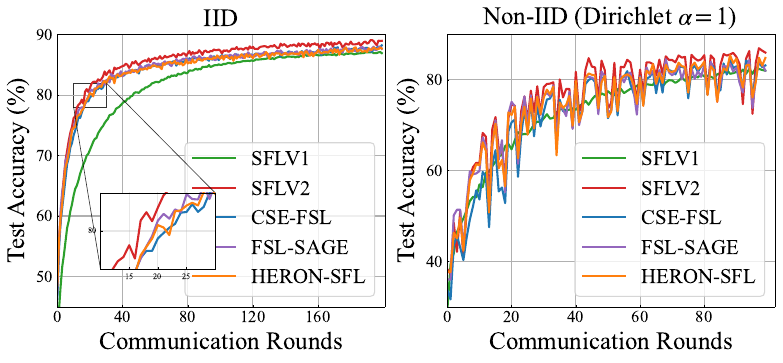}
    \caption{ResNet-18 test accuracy vs. communication rounds on CIFAR-10 for IID (left) and non-IID (right) distributions.}
    \label{fig: testacc-cifar10}
\end{figure}
\subsubsection{Convergence Behavior}
Fig.~\ref{fig: testacc-cifar10} illustrates the test accuracy of each method versus the number of communication rounds. In the IID setting, our proposed HERON-SFL shows convergence behavior nearly identical to other auxiliary-network baselines like CSE-FSL and FSL-SAGE\footnote{We note that FSL-SAGE does not exhibit a significant advantage in our experiments, which we attribute to our design choice of using a minimal auxiliary network purely for decoupling the updates of server and clients. This contrasts with the approach in \cite{nair2025fsl}, where the alignment mechanism of FSL-SAGE is more impactful as the auxiliary model is intentionally designed to be even larger than the client model, thus requiring explicit alignment to ensure consistency with the server's task.}, with all three performing slightly below the top-performing SFLV2. A similar trend is observed in the more challenging non-IID setting, which confirms that our hybrid algorithm achieves convergence comparable to its first-order counterparts.

\begin{table}[!t]
\caption{Client consumptions for ResNet-18 on CIFAR-10.\label{tab:cifar_consumption}}
\centering
\setlength{\tabcolsep}{8pt}
\renewcommand{\arraystretch}{1.35}
\begin{tabular}{c|c|c|c}
\hline
\textbf{Algorithm} & \textbf{Comm. (GB)} & \textbf{Peak FP (MB)} & \textbf{FLOPS (G)}\\
\hline
SFLV1 & 1216.00 & \multirow{2}{*}{\underline{709.93}} & \multirow{2}{*}{\underline{59.51}} \\
\cline{1-2}
SFLV2 & 390.67 &  &  \\
\hline
CSE-FSL & 258.55 & \multirow{2}{*}{726.46} & \multirow{2}{*}{59.85} \\
\cline{1-2}
FSL-SAGE & \underline{244.24} &  &  \\
\hline
HERON-SFL & \textbf{244.19} & \textbf{259.44} & \textbf{39.90} \\
\hline
\end{tabular}
\end{table}

\subsubsection{Communication, Storage, and Computational Costs}
Table~\ref{tab:cifar_consumption} summarizes the client-side resource costs, where the reported communication volume is the cumulative traffic incurred until the test accuracy first reaches 80\%. Under this criterion, HERON-SFL is among the most communication-efficient methods, requiring only 244.19~GB, essentially matching FSL-SAGE (244.24~GB) and improving over the remaining baselines.
The most significant advantages of HERON-SFL are evident in its on-device resource requirements. By eliminating client-side backpropagation, it drastically reduces the {peak memory footprint (Peak FP)} to just 259.44 MB---a reduction of approximately 64\% compared to the almost 710 MB required by SFLV1 and SFLV2. Similarly, the {computational cost (FLOPs)} is lowered to 39.90 G FLOPs, a reduction of over 33\% compared to the \textasciitilde59 G FLOPs of other methods. This substantial decrease in both storage and compute burden confirms that HERON-SFL is highly suitable for deployment in resource-constrained environments.

\begin{figure*}[!t]
\centering
\subfloat[Effect of data heterogeneity.\label{fig:non_iid}]
{\includegraphics[width=0.33\textwidth]{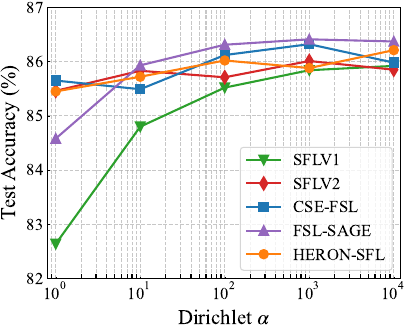}}\hfil
\subfloat[Effect of the number of clients.\label{fig:client_num}]
{\includegraphics[width=0.325\textwidth]{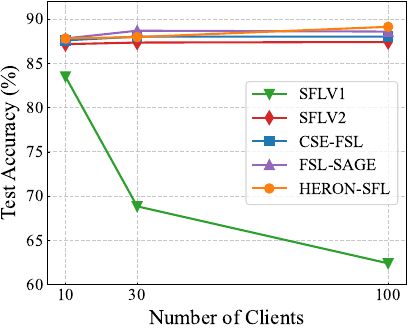}}\hfil
\subfloat[Effect of partial participation.\label{fig:partial}]
{\includegraphics[width=0.325\textwidth]{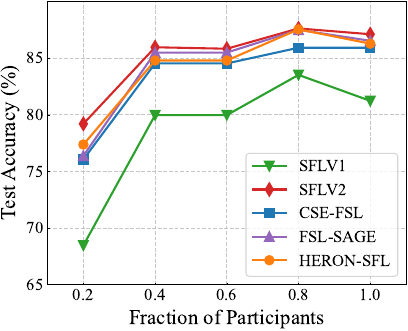}}
\caption{Test accuracy of different SFL algorithms on CIFAR-10 using a ResNet-18 model. 
    (a) Impact of data heterogeneity under varying Dirichlet $\alpha$ values. 
    (b) Client scalability under different total numbers of clients. 
    (c) Performance under different fractions of participating clients per round.}
\label{fig:cifar10_three_ablation}
\end{figure*}

\subsubsection{Effect of Data Heterogeneity (Non-IID)}

The impact of data heterogeneity is evaluated on CIFAR-10 using a ResNet-18 model with ten clients under full participation. As shown in Fig.\ref{fig:non_iid}, varying the Dirichlet concentration parameter creates a broad range of non-IID conditions, yet HERON-SFL maintains accuracy comparable to first-order SFL baselines across all levels of heterogeneity. The zeroth-order updates do not weaken the model’s ability to handle distributional shifts, and the perturbation-induced noise remains well controlled. These results indicate that HERON-SFL preserves the robustness to non-IID client data.

\subsubsection{Effect of the Number of Clients}
Scalability is examined by varying the total number of clients while keeping the dataset (CIFAR-10), model architecture (ResNet-18), and full participation unchanged under an IID configuration. As shown in Fig.\ref{fig:client_num}, HERON-SFL sustains nearly identical accuracy as the federation expands from ten to one hundred clients, demonstrating that HERON-SFL remains stable at larger scales. 

\subsubsection{Effect of Partial Participation}
Training under partial participation is evaluated on CIFAR-10 with ResNet-18 and 10 IID clients. 
As shown in Fig.~\ref{fig:partial}, HERON-SFL remains stable across a wide range of participation ratios, even when only a small fraction of clients contribute per round. Its accuracy closely tracks first-order SFL baselines, suggesting that partial participation does not affect the convergence of the zeroth-order client updates. Overall, HERON-SFL is robust to limited participation, as commonly observed in cross-device FL.

\subsubsection{Effect of Hyperparameters of ZO}
\begin{figure}[t]
\centering
\subfloat{
    \includegraphics[width=0.4\linewidth]{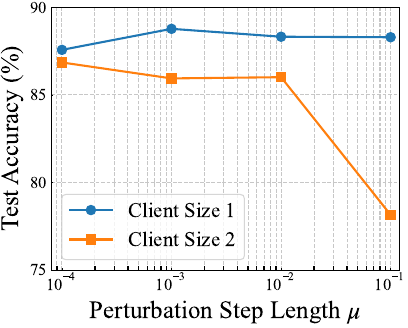}
}\hspace{1em}
\subfloat{
    \includegraphics[width=0.4\linewidth]{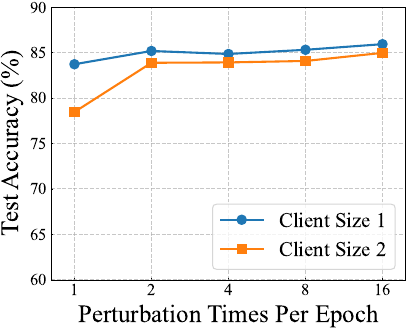}
}
\caption{Ablation study on local ZO training hyperparameters using ResNet-18 on CIFAR-10 under an IID setting with ten clients, with all experiments using the same auxiliary model implemented as a single linear layer. Client Size 1 denotes the first convolutional layer and one residual block on the client, while Client Size 2 places three residual blocks on the client. (left) Test accuracy under different perturbation step lengths $\mu$. (right) Test accuracy under different perturbation counts per epoch.}
\label{fig:ablation_zo}
\end{figure}

As shown in Fig.~\ref{fig:ablation_zo}, HERON-SFL exhibits stable performance across a wide range of zeroth-order hyperparameters, demonstrating robustness to both the perturbation step size and the number of perturbations per epoch. When an appropriate step size $\mu$ is selected, using only two perturbations per epoch is sufficient to ensure reliable convergence, indicating that HERON-SFL does not suffer from the instability often associated with zeroth-order optimization. Across both figures, Client Size 1 consistently achieves higher accuracy than Client Size 2, reflecting the expected increase in optimization difficulty when a larger portion of the model is placed on the client. This mild degradation is acceptable in SFL settings because resource-constrained devices typically hold only small client sub-models, while the majority of parameters remain on the server for first-order training. Overall, the results confirm that HERON-SFL maintains strong accuracy under practical ZO configurations and remains reliable even when client-side capacity varies.

\subsection{Language Model Fine-tuning}
\subsubsection{Training Behavior}
For the task of language model fine-tuning, HERON-SFL demonstrates superior communication efficiency and faster convergence. As illustrated in Fig.\ref{fig:e2e_ppl}, its validation perplexity decreases more rapidly than the baselines for both GPT2-Small and GPT2-Medium. Notably, for GPT2-Small, HERON-SFL converges faster and achieves a final perplexity that is competitive with SplitLoRA while outperforming both CSE-FSL and FSL-SAGE. While all methods reach a similar performance on GPT2-Medium, HERON-SFL does so with significantly less communication costs, and even slightly surpasses CSE-FSL and FSL-SAGE on GPT2-Small. This mild performance gain is consistent with recent findings in ZO-based LM fine-tuning, where the update landscape exhibits strong low-rank structure, making zeroth-order steps exceptionally effective. Similar behavior is reported in MeZO \cite{malladi2023fine}, which shows that ZO fine-tuning can match or even surpass first-order methods under comparable settings.
\begin{figure}
    \centering
    \includegraphics[width=0.85\linewidth]{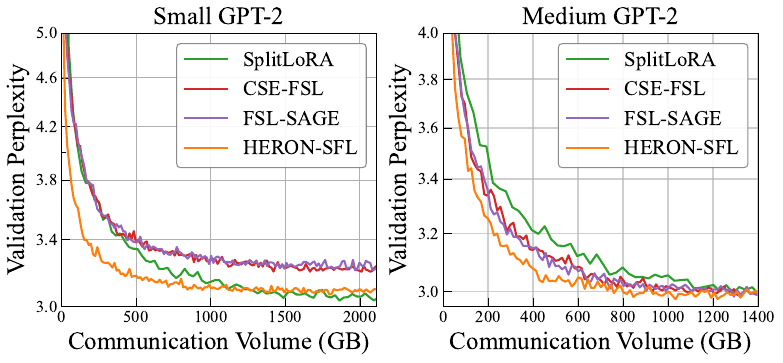}
    \caption{GPT2 perplexity curves vs. Communication Volume on E2E for small (left) and medium (right) models.}
    \label{fig:e2e_ppl}
\end{figure}

\begin{table}[!t]
\caption{Client consumptions for GPT2-Medium on E2E dataset.\label{tab:e2e_consumption}}
\centering
\setlength{\tabcolsep}{8pt}
\renewcommand{\arraystretch}{1.35}
\begin{tabular}{c|c|c}
\hline
\textbf{Algorithm} & \textbf{Peak FP (GB)} & \textbf{FLOPS (T)}\\
\hline
SplitLora & \underline{4.59} & \underline{5.68} \\
\hline
CSE-FSL & \multirow{2}{*}{9.09} & \multirow{2}{*}{9.48} \\
\cline{1-1}
FSL-SAGE &  &  \\
\hline
HERON-SFL & \textbf{4.03} & \textbf{5.26} \\
\hline
\end{tabular}
\end{table}

\subsubsection{Storage and Computational Costs}
Echoing the resource efficiency observed in the ResNet experiments, HERON-SFL substantially lowers the on-device computational and memory burden for clients. Table \ref{tab:e2e_consumption} provides a clear comparison of the resource consumption per local update. HERON-SFL requires a peak memory (Peak FP) of only 4.03 GB, which is less than half that of CSE-FSL (9.09 GB) and also more efficient than the SplitLoRA baseline (4.59 GB). 
The reduction in computational cost is even more pronounced, with HERON-SFL needing only 5.26 TFLOPS, a decrease of approximately 44\% compared to CSE-FSL and FSL-SAGE. This reduction in both memory footprint and floating-point operations confirms that by eliminating client-side backpropagation, our method significantly lowers the hardware barrier, making it feasible to fine-tune language models on resource-constrained devices.

\subsubsection{Ablation study of local model complexity}`
We investigate the impact of local model complexity on the GPT2-medium fine-tuning task. 
\begin{figure}
  \centering
  \includegraphics[width=0.5\linewidth]{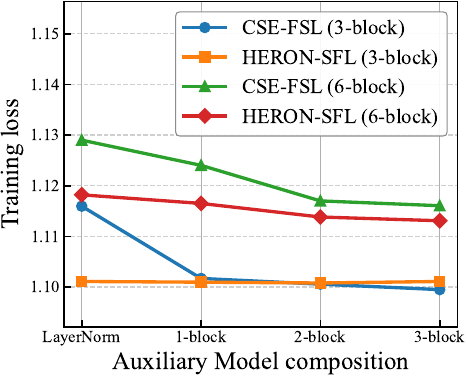}
  \caption{Effect of aux-model complexity on the GPT2-medium fine-tuning task.}
  \label{fig:ablation_aux}
\end{figure}
We ablate auxiliary-model complexity on the GPT2-medium fine-tuning task under two client partitions, where the client-side model contains either the first 3 or 6 transformer blocks. For each setting, we vary the auxiliary network from a minimal design (LayerNorm and unembedding layers only) to larger variants with 1–3 transformer blocks. Fig.\ref{fig:ablation_aux} reports the final training loss after a fixed number of communication rounds. 

HERON-SFL is largely insensitive to the auxiliary model’s capacity: in both settings, it achieves strong final loss even with the minimal auxiliary network. In contrast, the first-order baseline CSE-FSL benefits substantially from a stronger auxiliary model, with performance improving as the auxiliary network grows. These results suggest that ZO-based client updates do not require a resource-intensive auxiliary network, whereas FO baselines rely on auxiliary capacity to reach their full potential. Overall, HERON-SFL delivers strong convergence while reducing peak client memory to inference-level by eliminating client-side backpropagation, providing a better performance–cost trade-off than FO baselines without introducing additional communication overhead.

\section{Conclusion}
In this work, we have proposed HERON-SFL, a novel hybrid ZO-FO framework that addresses the critical computation and memory limitations on edge devices within SFL. By performing zeroth-order optimization on the client side, HERON-SFL eliminates the need for backpropagation and activation caching during local updates, thereby significantly reducing on-device computational and memory requirements, while operating under the same communication budget as existing auxiliary-network-based SFL frameworks.
Empirical and theoretical analysis demonstrated that the framework not only achieves a theoretical convergence rate of $\mathcal O(1/\sqrt{T})$ independent of model dimensionality under the low effective rank assumption, but also empirically matches the accuracy of SFL benchmarks on diverse tasks while substantially reducing client-side resource costs.
Future work may explore non-differentiable objectives—for example, directly optimizing evaluation metrics or incorporating human feedback \cite{ouyang2022training}, which align well with the gradient-free nature of client-side updates. 
Another promising direction is to strengthen privacy guarantees: HERON-SFL enjoys the same privacy properties as standard SL/SFL due to its cut-layer design, but the privacy protection of intermediate activations in SFL can be further strengthened~\cite{niu2024all}.

\appendices
\section{Proofs of Theorems}\label{appendix:theoretical_proof}

\subsection{Notation and Problem Setup Recap}
We denote the global model parameters at round $t$ as $\boldsymbol\theta_{g}^t = \{\boldsymbol\theta_c^t, \boldsymbol\theta_s^t\}$, and the local model parameters at client $\mathcal{C}_i$ as $\boldsymbol\theta_{l,i}^t = \{\boldsymbol\theta_{c,i}^t, \boldsymbol\theta_{a,i}^t\}$.
According to the local update (\ref{Eq:ClientLocalUpdate}) ($\hat{\boldsymbol{g}}_{c,i}^{t,m}=\hat{\nabla} f_{c,i}^{t,m}(\boldsymbol\theta_{c};\xi_i))$) at clients and the aggregation at Fed Server, each communication round in (\ref{Eq:Aggregation}), we have:
\begin{equation}\label{Eq:ClientUpdateGlobal}
    \boldsymbol\theta_\text{c}^{t+1} - \boldsymbol\theta_\text{c}^t = \boldsymbol\theta_\text{c}^{t, h} - \boldsymbol\theta_\text{c}^t = -\frac{\eta_c}{N}\sum\nolimits_{i=1}^{N} \sum\nolimits_{m=1}^h \hat{\boldsymbol{g}}_{c,i}^{t,m},
\end{equation}
We decompose the global model's convergence behavior into client-side and server-side contributions. Same as the Proposition 3.4 and 3.5 in~\cite{han2024convergence}, under Assumptions \ref{assumption:smoothness}, we have:
\begin{equation}\label{eq:decompose}
    \begin{aligned}
    &\mathbb{E}_t[f(\boldsymbol\theta_{g}^{t+1})] - f(\boldsymbol\theta_{g}^t) \leq \mathbb{E}_t[\mathcal C] + \mathbb{E}_t[\mathcal S] \\
    \end{aligned}
\end{equation}
where $\mathcal C = \nabla f(\boldsymbol\theta_\text{c}^t)^T (\boldsymbol\theta_\text{c}^{t+1} - \boldsymbol\theta_\text{c}^t) + \frac{L}{2} \|\boldsymbol\theta_\text{c}^{t+1} - \boldsymbol\theta_\text{c}^t\|^2$ and $\mathcal S = \nabla f(\boldsymbol\theta_\text{s}^t)^T (\boldsymbol\theta_\text{s}^{t+1} - \boldsymbol\theta_\text{s}^t) + \frac{L}{2} \|\boldsymbol\theta_\text{s}^{t+1} - \boldsymbol\theta_\text{s}^t\|^2$ denote the contributions from the client-side and server-side models, respectively. $\mathbb{E}_t[\cdot]$ denotes the expectation on all randomness up to round $t$.

\subsection{Proof of Theorem~\ref{theorem:convergence_rate_scratch}}

\subsubsection{Preliminary Lemmas}
To begin the convergence analysis, we start with a few lemmas that will be useful in the subsequent proofs.

\begin{lemma}[\textbf{Gradient and Smoothness for Gaussian Smoothed Functions}~\cite{nesterov2017random}]
\label{lemma:gaussian_smoothing}
Let $f:\mathbb{R}^d\to\mathbb{R}$ be $L$-smooth. Then $f^\mu$ is continuously differentiable with
$L_\mu\le L$, and its gradient admits the representation
\begin{equation}
\nabla f^\mu(\boldsymbol{x})
= \mathbb{E}_{\boldsymbol{u}}
\big[\frac{f(\boldsymbol{x}+\mu \boldsymbol{u})-f(\boldsymbol{x})}{\mu}\,d\boldsymbol{u}\big]. 
\end{equation}

\end{lemma}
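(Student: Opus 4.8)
The plan is to establish the three claims --- differentiability, the Lipschitz bound $L_\mu \le L$, and the finite-difference gradient representation --- in an order that lets the easier consequences follow from the harder representation step. First I would verify that $f^\mu(\boldsymbol{x})$ is finite and well-defined for every $\boldsymbol{x}$: by $L$-smoothness (Assumption~\ref{assumption:smoothness}) we have the quadratic bound $|f(\boldsymbol{x}+\mu\boldsymbol{u})| \le |f(\boldsymbol{x})| + \mu\|\nabla f(\boldsymbol{x})\|\,\|\boldsymbol{u}\| + \tfrac{L}{2}\mu^2\|\boldsymbol{u}\|^2$, and since $\|\boldsymbol{u}\|=1$ almost surely on the unit sphere the expectation is manifestly finite. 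This growth bound is precisely what I will reuse to justify every interchange of expectation and differentiation below via dominated convergence.

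Next comes the core step: the gradient representation. I would write $f^\mu$ as an integral of $f$ against the smoothing kernel and transfer the $\boldsymbol{x}$-derivative off $f$ and onto the kernel. Concretely, I change variables $\boldsymbol{v} = \boldsymbol{x} + \mu\boldsymbol{u}$ so that the $\boldsymbol{x}$-dependence sits in the (smooth) kernel; differentiating under the integral sign --- licensed by the integrable dominating function from the first step --- produces a score-function representation $\nabla f^\mu(\boldsymbol{x}) = \mathbb{E}_{\boldsymbol{u}}[f(\boldsymbol{x}+\mu\boldsymbol{u})\,\boldsymbol{s}(\boldsymbol{u})]$ for the appropriate score $\boldsymbol{s}$. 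Because $\mathbb{E}_{\boldsymbol{u}}[\boldsymbol{u}] = 0$ by the symmetry of the distribution, I may subtract the null term $\tfrac{1}{\mu}f(\boldsymbol{x})\,\mathbb{E}_{\boldsymbol{u}}[\boldsymbol{u}]=0$ to recenter the finite difference, yielding the stated form $\nabla f^\mu(\boldsymbol{x}) = \mathbb{E}_{\boldsymbol{u}}\big[\tfrac{f(\boldsymbol{x}+\mu\boldsymbol{u})-f(\boldsymbol{x})}{\mu}\,\boldsymbol{u}\big]$.

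With the representation in hand, the remaining two claims are immediate. Continuity of $\nabla f^\mu$ follows from continuity of $\nabla f$ together with dominated convergence, giving $f^\mu \in C^1$. For the Lipschitz bound I use the equivalent chain-rule representation $\nabla f^\mu(\boldsymbol{x}) = \mathbb{E}_{\boldsymbol{u}}[\nabla f(\boldsymbol{x}+\mu\boldsymbol{u})]$ and apply Jensen's inequality followed by $L$-smoothness: $\|\nabla f^\mu(\boldsymbol{x}) - \nabla f^\mu(\boldsymbol{y})\| \le \mathbb{E}_{\boldsymbol{u}}\|\nabla f(\boldsymbol{x}+\mu\boldsymbol{u}) - \nabla f(\boldsymbol{y}+\mu\boldsymbol{u})\| \le L\|\boldsymbol{x}-\boldsymbol{y}\|$, so $L_\mu \le L$.

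The hardest part will be the integration-by-parts / score step and pinning down the normalization constant. For a Gaussian kernel this is clean, since $\nabla \phi(\boldsymbol{u}) = -\boldsymbol{u}\,\phi(\boldsymbol{u})$ supplies the score directly with constant $1$; but the definition here smooths over the uniform sphere $\mathbb{S}^{d-1}$, whose surface measure is not absolutely continuous, so the derivative cannot be passed onto a density in the same elementary way. I would instead invoke the divergence theorem on the ball $B(\boldsymbol{x},\mu)$ to relate the surface integral of $f\,\boldsymbol{u}$ to the volume integral of $\nabla f$; this is exactly what introduces the dimension factor $d$ appearing in the two-point estimator of Eq.~\eqref{Eq:ZerothOrderGradientEstimator}. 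Carefully justifying the differentiation-under-the-integral interchange via the growth bound of the first step and tracking this normalization constant are the two technical points that require attention; everything else reduces to the cited Nesterov--Spokoiny machinery~\cite{nesterov2017random}.
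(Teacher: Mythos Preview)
The paper does not prove this lemma: it is stated as a preliminary result and attributed entirely to~\cite{nesterov2017random}, with no argument given. Your proposal therefore goes well beyond what the paper offers, and the outline you give --- well-definedness from the quadratic upper bound, passing the derivative onto the smoothing kernel, recentering via $\mathbb{E}_{\boldsymbol u}[\boldsymbol u]=0$, and then reading off $C^1$ and $L_\mu\le L$ from $\nabla f^\mu(\boldsymbol x)=\mathbb{E}_{\boldsymbol u}[\nabla f(\boldsymbol x+\mu\boldsymbol u)]$ --- is the standard Nesterov--Spokoiny argument and is correct in spirit.

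One point worth sharpening: you correctly flag that the paper's Definition~1 smooths over the \emph{unit sphere}, not the Gaussian or the unit ball, and that this is where the dimension factor $d$ in the stated formula must come from. But the divergence-theorem identity you invoke relates the gradient of the \emph{ball average} to a sphere surface integral with factor $d$; the gradient of the sphere average itself does not directly admit that representation. So if you are to match the statement exactly as written, you either need to (i) reinterpret $f^\mu$ as the ball average (which is what the cited Nesterov result and the estimator in Eq.~\eqref{Eq:ZerothOrderGradientEstimator} actually correspond to), or (ii) note that the paper's definition and lemma are slightly misaligned and proceed with the ball-average version. Since the paper simply imports the result by citation, this discrepancy is the paper's, not yours; your identification of the issue is the right instinct.
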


\begin{lemma}[\textbf{Bound on the Second Moment of the ZO Estimator}]\label{lemma:zo_second_moment}
    Under Assumptions~\ref{assumption:smoothness}--\ref{assumption:bounded_variance}, the second moment of the zeroth-order gradient estimator $\boldsymbol{\hat{g}}_{c,i}^{t,m}$ is bounded as follows:
    \begin{equation}
    \begin{aligned}
    &\mathbb{E}_{t,m}\big[\|\hat{\boldsymbol{g}}_{c,i}^{t,m} \|^2\big]     \leq \frac{2dG_c^2}{B} + \frac{d^2L^2\mu^2}{2B} + 2\mu^2L^2 + 6\sigma^2_c  + 6\|\nabla f_{c}(\boldsymbol \theta_{c}^{t})\|^2 + 6L^2\mathbb{E}_{t,m-1}\big[\big\|\boldsymbol \theta_{c}^{t} - \boldsymbol \theta_{c,i}^{t,m}\big\|^2\big],
    \end{aligned}
    \end{equation}
\end{lemma}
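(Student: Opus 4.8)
The plan is to bound $\mathbb{E}_{t,m}\big[\|\hat{\boldsymbol g}_{c,i}^{t,m}\|^2\big]$ by peeling the two-point ZO estimator apart into five pieces: (i) the intrinsic variance of the perturbed-difference estimator (which carries the $d$ and $d^2\mu^2/B$ factors), (ii) the smoothing-induced bias $\nabla f^\mu-\nabla f$, (iii) the stochastic mini-batch gradient noise controlled by Assumption~\ref{assumption:bounded_variance}, (iv) the true client gradient norm $\|\nabla f_c(\boldsymbol\theta_c^t)\|^2$, and (v) the local client drift $\|\boldsymbol\theta_c^t-\boldsymbol\theta_{c,i}^{t,m}\|^2$. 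First I would condition on the mini-batch $\xi_i$ and invoke Lemma~\ref{lemma:gaussian_smoothing} so that the estimator is (conditionally) unbiased for the mini-batch smoothed gradient, $\mathbb{E}_{\boldsymbol u}[\hat{\boldsymbol g}_{c,i}^{t,m}]=\nabla f^{\mu}_{c,i}(\boldsymbol\theta_{c,i}^{t,m};\xi_i)$, and split via the bias--variance identity $\mathbb{E}_{\boldsymbol u}\|\hat{\boldsymbol g}\|^2=\mathbb{E}_{\boldsymbol u}\|\hat{\boldsymbol g}-\nabla f^{\mu}_{c,i}\|^2+\|\nabla f^{\mu}_{c,i}\|^2$ (or, equivalently, the cruder $\mathbb{E}\|\hat{\boldsymbol g}\|^2\le 2\,\mathbb{E}\|\hat{\boldsymbol g}-\mathbb{E}_{\boldsymbol u}\hat{\boldsymbol g}\|^2+2\,\mathbb{E}\|\mathbb{E}_{\boldsymbol u}\hat{\boldsymbol g}\|^2$ that yields the displayed constant in front of the $\mu$-terms).

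For the variance term I would bound $\mathbb{E}_{\boldsymbol u}\|\hat{\boldsymbol g}-\nabla f^\mu_{c,i}\|^2\le\mathbb{E}_{\boldsymbol u}\|\hat{\boldsymbol g}\|^2$ and apply the standard two-point second-moment estimate: Taylor-expand $\ell_{c,i}(\boldsymbol\theta+\mu\boldsymbol u;\xi_{i,j})$ about $\boldsymbol\theta$ with the $L$-smooth quadratic remainder $\le \tfrac{L\mu^2}{2}$, use $\|\boldsymbol u\|=1$ and $\mathbb{E}_{\boldsymbol u}[\langle v,\boldsymbol u\rangle^2]=\|v\|^2/d$ for the unit-sphere perturbation, and exploit that the batch estimator is a $\tfrac1B$-average of $B$ conditionally independent perturbed differences so the averaging survives passing to $\|\cdot\|^2$; combined with Assumption~\ref{assumption:bounded_gradients} ($\mathbb{E}\|\nabla\ell_{c,i}\|^2\le G_c^2$) this produces exactly the $\tfrac{2dG_c^2}{B}+\tfrac{d^2L^2\mu^2}{2B}$ contribution. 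For the remaining $\|\nabla f^{\mu}_{c,i}(\boldsymbol\theta_{c,i}^{t,m};\xi_i)\|^2$ I would telescope along the chain $\nabla f^\mu_{c,i}(\boldsymbol\theta_{c,i}^{t,m};\xi_i)\to\nabla f_{c,i}(\boldsymbol\theta_{c,i}^{t,m};\xi_i)\to\nabla f_c(\boldsymbol\theta_{c,i}^{t,m})\to\nabla f_c(\boldsymbol\theta_c^t)$, apply $\|\sum_{k}a_k\|^2\le k\sum_k\|a_k\|^2$ together with $\|a+b\|^2\le 2\|a\|^2+2\|b\|^2$, and identify each piece: the smoothing gap $\|\nabla f^\mu-\nabla f\|^2=\mathcal{O}(\mu^2L^2)$ via the dimension-free bound $\|\nabla f^\mu(\boldsymbol x)-\nabla f(\boldsymbol x)\|\le \mu L$ valid for unit-sphere smoothing (this is the reason the $\mu$-term comes out as $2\mu^2L^2$ rather than the $(d+3)^{3/2}$-type bound used later for Theorem~\ref{theorem:low_rank}); the mini-batch-versus-full-gradient mismatch absorbed into $\sigma_c^2$ by Assumption~\ref{assumption:bounded_variance}; and $\|\nabla f_c(\boldsymbol\theta_{c,i}^{t,m})-\nabla f_c(\boldsymbol\theta_c^t)\|^2\le L^2\|\boldsymbol\theta_{c,i}^{t,m}-\boldsymbol\theta_c^t\|^2$ by Assumption~\ref{assumption:smoothness}. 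Collecting these with the bookkeeping constants yields the $6\sigma_c^2+6\|\nabla f_c(\boldsymbol\theta_c^t)\|^2+6L^2\,\mathbb{E}_{t,m-1}\|\boldsymbol\theta_c^t-\boldsymbol\theta_{c,i}^{t,m}\|^2$ block, after which taking the outstanding expectations $\mathbb{E}_{\xi}$ and $\mathbb{E}_{t,m-1}$ closes the argument.

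The main obstacle is purely bookkeeping: threading the $\tfrac1B$ factor onto precisely the ZO-variance terms (which requires treating the per-sample perturbed differences as conditionally independent, mean-$\nabla f^\mu_{c,i}(\cdot;\xi_{i,j})$ vectors and not losing the $1/B$ when moving the norm inside the average), while ensuring the $\sigma_c^2$ and drift terms correctly carry no $\tfrac1B$; and selecting the right split multiplicities so the coefficients land as $2,\tfrac12,2$ on the $\mu$/$d$-terms and $6,6,6$ on the gradient-norm block. Apart from these constant-chasing choices, every step is a routine application of $L$-smoothness, the unit-sphere moment identities, and Young's inequality.
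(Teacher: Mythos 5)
Your proposal is correct and follows essentially the same route as the paper's proof: a bias--variance split of the estimator around the smoothed gradient (unbiasedness from Lemma~\ref{lemma:gaussian_smoothing}), the $1/B$-reduced two-point second-moment bound yielding $\tfrac{2dG_c^2}{B}+\tfrac{d^2L^2\mu^2}{2B}$ (which the paper cites from the literature rather than re-deriving via Taylor expansion and sphere moments), and then the two-way plus three-way triangle splits producing the $2\mu^2L^2$, $6\sigma_c^2$, $6\|\nabla f_c(\boldsymbol\theta_c^t)\|^2$, and $6L^2$-drift terms. Your only deviation---inserting $\nabla f_c(\boldsymbol\theta_{c,i}^{t,m})$ as the intermediate point instead of the paper's $\nabla f_{c,i}(\boldsymbol\theta_c^t)$---is immaterial and yields identical constants.
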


\begin{proof}
We expand $\mathbb{E}_{t,m}\!\big[\|\hat{\boldsymbol{g}}_{c,i}^{t,m}\|^2\big]$ by conditioning on the randomness at step $(t,m)$, yielding
\begin{equation}
    \begin{aligned}
        &\mathbb{E}_{t,m}\big[\|\hat{\boldsymbol{g}}_{c,i}^{t,m} \|^2\big]
        \overset{(a)}{=} \mathbb{E}_{t,m-1}\big[ \mathbb{E}_t^m\big[\|\hat{\boldsymbol{g}}_{c,i}^{t,m} \|^2\big] \big] \\
        \overset{(b)}{=}& \mathbb{E}_{t,m-1}\big[ \mathrm{Var}_t^m(\hat{\boldsymbol{g}}_{c,i}^{t,m}) + \|\mathbb{E}_t^m[\hat{\boldsymbol{g}}_{c,i}^{t,m}]\|^2 \big] \\
        \overset{(c)}{=}& \mathbb{E}_{t,m-1}\big[ \mathrm{Var}_t^m(\hat{\boldsymbol{g}}_{c,i}^{t,m}) \big] + \mathbb{E}_{t,m-1}\big[ \|\nabla f_{c,i}^\mu(\boldsymbol \theta_{c,i}^{t,m})\|^2 \big]\\
        \overset{(d)}{\leq}&\frac{1}{B} \mathbb{E}_t^m\big[\big\Vert \hat{\boldsymbol{g}}_{c,i}^{t,m}(\boldsymbol\theta_{l};\xi_{i,1})\big\Vert^2\big]+ \mathbb{E}_{t,m-1}\big[\big\| \nabla f_{c,i}^\mu(\boldsymbol \theta_{c,i}^{t,m}) \big\|^2 \big].
    \end{aligned}
\end{equation}
where (a) follows from the tower property $\mathbb{E}_{t,m}[\cdot]=\mathbb{E}_{t,m-1}\big[\mathbb{E}_t^m[\cdot]\big]$, (b) applies $\mathbb{E}[\|\boldsymbol{a}\|^2] = \mathrm{Var}(\boldsymbol{a}) + \|\mathbb{E}[\boldsymbol{a}]\|^2$, (c) follows the outcome of Lemma~\ref{lemma:gaussian_smoothing}, and (d) applies the i.i.d. mini-batch property and $\mathrm{Var}(X) \le \mathbb{E}[\|X\|^2]$.
We now bound the two terms separately. For the first term, we use the bound for two-point estimators (Lemma 4.1 in \cite{gao2018information}) and Assumption~\ref{assumption:bounded_gradients}:
\begin{equation}
\begin{aligned}
    \mathbb{E}_{t,m}\big[\big\Vert \hat{\boldsymbol{g}}_{c,i}^{t,m}(\boldsymbol\theta_{l};\xi_{i,1})\big\Vert^2\big]
    \leq& 2d \mathbb{E}_{t,m}\big[\Vert\nabla \ell_{c,i}(\boldsymbol{\theta}_{c,i}^{t,m};\xi_{i,1})\Vert^2\big] + \frac{1}{2}d^2L^2\mu^2\\
    \leq& 2dG_c^2 + \frac{1}{2}d^2L^2\mu^2.
\end{aligned}
\end{equation}
For the second term, we use the triangle inequality and $\|a+b\|^2 \le 2\|a\|^2 + 2\|b\|^2$:
\begin{equation}
    \begin{aligned}
    \mathbb{E}_{t,m-1}\big[\big\| \nabla f_{c,i}^\mu(\boldsymbol \theta_{c,i}^{t,m}) \big\|^2 \big] 
    \leq& 2\mathbb{E}_{t,m-1}\big[\big\| \nabla f_{c,i}^\mu(\boldsymbol \theta_{c,i}^{t,m}) - \nabla f_{c,i}(\boldsymbol \theta_{c,i}^{t,m}) \big\|^2 \big] + 2\mathbb{E}_{t,m-1}\big[\big\| \nabla f_{c,i}(\boldsymbol \theta_{c,i}^{t,m}) \big\|^2 \big]\\
    \leq& 2\mu^2L^2 + 2\mathbb{E}_{t,m-1}\big[\big\| \nabla f_{c,i}(\boldsymbol \theta_{c,i}^{t,m}) \big\|^2 \big].
    \end{aligned}
\end{equation}
Finally, we bound the remaining term by relating it to the global model state $\boldsymbol\theta_c^t$. Using inequality $||a+b+c||^2 \le 3||a||^2 + 3||b||^2 + 3||c||^2$, we have:
\begin{equation}
    \begin{aligned}
    \mathbb{E}_{t,m-1}\big[\big\| \nabla f_{c,i}(\boldsymbol \theta_{c,i}^{t,m}) \big\|^2 \big] 
    =& \mathbb{E}_{t,m-1}[| (\nabla f_{c,i}(\boldsymbol \theta_{c,i}^{t,m}) - \nabla f_{c,i}(\boldsymbol \theta_{c}^{t})) \\&+ (\nabla f_{c,i}(\boldsymbol \theta_{c}^{t}) - \nabla f_{c}(\boldsymbol \theta_{c}^{t})) + \nabla f_{c}(\boldsymbol \theta_{c}^{t})\|^2]\\
    \leq& 3\mathbb{E}_{t,m-1}\big[\big\| \nabla f_{c,i}(\boldsymbol \theta_{c,i}^{t,m}) - \nabla f_{c,i}(\boldsymbol \theta_{c}^{t})\big\|^2\big] \\&+ 3\|\nabla f_{c,i}(\boldsymbol \theta_{c}^{t}) - \nabla f_{c}(\boldsymbol \theta_{c}^{t})\|^2 + 3\|\nabla f_{c}(\boldsymbol \theta_{c}^{t})\|^2\\
    \leq& 3L^2\mathbb{E}_{t,m-1}\big[\big\|\boldsymbol \theta_{c,i}^{t,m} - \boldsymbol \theta_{c}^{t}\big\|^2\big] + 3\sigma^2_c + 3\|\nabla f_{c}(\boldsymbol \theta_{c}^{t})\|^2,
    \end{aligned}
\end{equation}
where the final inequality follows from Assumptions~\ref{assumption:smoothness} and~\ref{assumption:bounded_variance}. Combining all these bounds yields the result stated in the lemma.
\end{proof}

\begin{lemma}[\textbf{Bound on Client Model Divergence}]\label{lemma:client_bound}
    For $\eta_c\leq \frac{1}{3Lh}$, we have:
    \begin{equation}
        \begin{aligned}
            &\mathbb{E}_t\big[\frac{1}{N}\sum\nolimits_{i=1}^{N} \sum\nolimits_{m=1}^h \|\boldsymbol\theta_{c,i}^{t,m} - \boldsymbol\theta_\text{c}^t\|^2\big] \\
            \leq&  {3h^3\eta_c^2}\Vert\nabla f_c(\boldsymbol{\theta}_c^t)\Vert^2  + \frac{dG_c^2h^3\eta_c^2}{B}+ \frac{d^2L^2\mu^2h^3\eta_c^2}{4B} + \frac{(6\sigma_c^2+2\mu^2L^2)h^3\eta_c^2}{2}.
        \end{aligned}
    \end{equation}
\end{lemma}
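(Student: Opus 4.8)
\emph{Proof strategy.} This is a client-divergence (local-drift) bound of the kind familiar from local-SGD/FedAvg analyses, specialized to the two-point ZO update and closed using the second-moment estimate of Lemma~\ref{lemma:zo_second_moment}. I would proceed in four steps: telescope, apply Jensen, substitute Lemma~\ref{lemma:zo_second_moment}, and resolve the resulting self-referential inequality using the step-size restriction.

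First I would telescope the client-side local recursion. Since $\boldsymbol\theta_{c,i}^{t,0}=\boldsymbol\theta_c^t$ for every $i$, after $m$ local steps the iterate satisfies $\boldsymbol\theta_{c,i}^{t,m}-\boldsymbol\theta_c^t = -\eta_c\sum\nolimits_{j} \hat{\boldsymbol g}_{c,i}^{t,j}$, a partial sum of at most $h$ of the estimator vectors (up to the indexing convention of Eq.~\eqref{Eq:ClientLocalUpdate}). Jensen's inequality (Cauchy--Schwarz on the sum) gives $\|\boldsymbol\theta_{c,i}^{t,m}-\boldsymbol\theta_c^t\|^2 \le \eta_c^2 (m-1)\sum\nolimits_{j=1}^{m-1}\|\hat{\boldsymbol g}_{c,i}^{t,j}\|^2$. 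Summing over $m=1,\dots,h$, using $\sum\nolimits_{m=1}^{h}(m-1)=\tfrac{h(h-1)}{2}\le\tfrac{h^2}{2}$ and enlarging each inner sum to $\sum\nolimits_{j=1}^{h}$, and then averaging over the $N$ clients yields
\[
\tfrac1N\sum\nolimits_{i}\sum\nolimits_{m=1}^{h}\big\|\boldsymbol\theta_{c,i}^{t,m}-\boldsymbol\theta_c^t\big\|^2 \;\le\; \tfrac{\eta_c^2 h^2}{2}\cdot\tfrac1N\sum\nolimits_{i}\sum\nolimits_{m=1}^{h}\big\|\hat{\boldsymbol g}_{c,i}^{t,m}\big\|^2 .
\]

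Second, I would take $\mathbb{E}_t[\cdot]$ (using the tower property so the conditional expectations of Lemma~\ref{lemma:zo_second_moment} align with the round-$t$ filtration) and substitute the per-step second-moment bound of Lemma~\ref{lemma:zo_second_moment} into every $\mathbb{E}\|\hat{\boldsymbol g}_{c,i}^{t,m}\|^2$. After the double summation, the dimension/variance/smoothing terms $\tfrac{2dG_c^2}{B},\ \tfrac{d^2L^2\mu^2}{2B},\ 2\mu^2L^2,\ 6\sigma_c^2,\ 6\|\nabla f_c(\boldsymbol\theta_c^t)\|^2$ each acquire the prefactor $\tfrac{\eta_c^2 h^3}{2}$, while the divergence term $6L^2\mathbb{E}\|\boldsymbol\theta_c^t-\boldsymbol\theta_{c,i}^{t,m}\|^2$ reproduces, up to a factor $3\eta_c^2 h^2 L^2$, the very quantity being bounded. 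Writing $D_t$ for the left-hand side, this is the self-referential inequality $D_t \le \tfrac{\eta_c^2 h^3}{2}\big(6\|\nabla f_c(\boldsymbol\theta_c^t)\|^2+\tfrac{2dG_c^2}{B}+\tfrac{d^2L^2\mu^2}{2B}+2\mu^2L^2+6\sigma_c^2\big) + 3\eta_c^2 h^2 L^2 D_t$. Finally, I would resolve the recursion: the hypothesis $\eta_c\le\tfrac1{3Lh}$ forces $\eta_c^2 h^2 L^2\le\tfrac19$, so the self-coupling coefficient $3\eta_c^2 h^2 L^2\le\tfrac13<1$; moving it to the left, bounding $(1-3\eta_c^2h^2L^2)^{-1}$ by a constant, and regrouping the five terms produces the claimed right-hand side ($3h^3\eta_c^2\|\nabla f_c(\boldsymbol\theta_c^t)\|^2$ from the gradient term, together with $\tfrac{dG_c^2h^3\eta_c^2}{B}$, $\tfrac{d^2L^2\mu^2h^3\eta_c^2}{4B}$, and $\tfrac{(6\sigma_c^2+2\mu^2L^2)h^3\eta_c^2}{2}$ from the remaining terms).

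I expect the difficulty here to be bookkeeping rather than conceptual. One must track the index ranges carefully ($m$ versus $m-1$ in the estimator argument and in the partial sums), make sure the divergence term collapses back to exactly $D_t$ after averaging over $i$ and $m$ so that the recursion is genuinely closed, and check that $\eta_c\le 1/(3Lh)$ is precisely the threshold that both keeps the contraction factor strictly below $1$ and lets the $(1-3\eta_c^2h^2L^2)^{-1}$ factor be absorbed into the stated constants (a marginally smaller step size would suffice if the constants need tightening). The only genuinely subtle point is the self-reference: the bound cannot be obtained by a single substitution pass, and the learning-rate restriction is exactly what is needed to close the loop.
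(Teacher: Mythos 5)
Your proposal follows essentially the same route as the paper: unroll the local recursion, apply Cauchy--Schwarz, substitute the second-moment bound of Lemma~\ref{lemma:zo_second_moment}, and resolve the resulting self-referential inequality using $\eta_c\le\tfrac{1}{3Lh}$, which forces the self-coupling coefficient $3\eta_c^2h^2L^2\le\tfrac13$. The one discrepancy is in the bookkeeping of the constant term, and it means your argument as written proves a slightly weaker inequality than the one stated. By enlarging every inner partial sum to a full sum of length $h$ and pulling out $\sum_{m}(m-1)\le h^2/2$ \emph{before} substituting Lemma~\ref{lemma:zo_second_moment}, you arrive at $D_t \le \tfrac{\eta_c^2h^3}{2}\beta + 3\eta_c^2h^2L^2 D_t$ with $\beta$ the bracketed constant; inverting $(1-3\eta_c^2h^2L^2)^{-1}\le\tfrac32$ then gives $D_t\le\tfrac{3}{4}\eta_c^2h^3\beta$, whereas the lemma asserts $D_t\le\tfrac12\eta_c^2h^3\beta$ (that is exactly what the stated right-hand side equals). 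So your final regrouping does not ``produce the claimed right-hand side''; it produces constants that are a factor $\tfrac32$ larger. The paper avoids this by substituting the second-moment bound first and keeping the per-step weight on the constant term, i.e.\ $s_c^{t,\tau}\le 6L^2\tau\eta_c^2\sum_{m<\tau}s_c^{t,m+1}+\tau^2\eta_c^2\beta$, and only then summing over $\tau$ with $\sum_{\tau\le h}\tau^2\le h^3/3$, which yields the pre-inversion constant $\tfrac{\eta_c^2h^3}{3}\beta$ and hence exactly the stated bound after multiplying by $\tfrac32$. Note also that your suggested fix of taking a marginally smaller step size does not close the gap: even in the limit where the contraction factor vanishes, your route only just matches the stated constants, so under the stated condition $\eta_c\le\tfrac1{3Lh}$ it cannot. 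The discrepancy is immaterial for the downstream asymptotics (Lemma~\ref{lemma:client_contribution_bound} only uses the bound up to absorbable constants), but to prove the lemma as written you need the tighter $\tau^2$ bookkeeping.
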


\begin{proof}
We define
$s_c^{t,m} \triangleq \frac{1}{N}\sum\nolimits_{i=1}^{N}\, \mathbb{E}_{t,m}\!\big[\big\|\boldsymbol\theta_{c,i}^{t,m}-\boldsymbol\theta_{c}^t\big\|^2\big]$ for simplicity.
For the $\tau$-th local update, unrolling the client recursion gives $\boldsymbol\theta_{c,i}^{t,\tau}-\boldsymbol\theta_{c}^t= -\,\eta_c \sum\nolimits_{m=0}^{\tau-1}\hat{\boldsymbol g}_{c,i}^{t,m}$.

By Cauchy--Schwarz,
\begin{equation}\label{eq:sc_tau_cs}
\begin{aligned}
s_c^{t,\tau}
&= \frac{1}{N}\sum\nolimits_{i=1}^{N}\, \mathbb{E}_{t,\tau}\!\big[\big\|-\eta_c \sum\nolimits_{m=0}^{\tau-1}\hat{\boldsymbol g}_{c,i}^{t,m}\big\|^2\big]\\
&\le\ \tau\,\eta_c^2 \cdot \frac{1}{N}\sum\nolimits_{i=1}^{N}\sum\nolimits_{m=0}^{\tau-1}\mathbb{E}_{t,\tau}\!\big[\big\|\hat{\boldsymbol g}_{c,i}^{t,m}\big\|^2\big] \\
&\stackrel{\text{(tower)}}{=} \tau\,\eta_c^2 \cdot \frac{1}{N}\sum\nolimits_{i=1}^{N}\sum\nolimits_{m=0}^{\tau-1}\mathbb{E}_{t,m}\!\big[\big\|\hat{\boldsymbol g}_{c,i}^{t,m}\big\|^2\big].
\end{aligned}
\end{equation}
We now invoke the second-moment bound (Lemma~\ref{lemma:zo_second_moment}): for every $m$,
\begin{equation}\label{eq:second_moment_bound}
\begin{aligned}
\frac{1}{N}\sum\nolimits_{i=1}^{N}\mathbb{E}_{t,m}\!\big[\big\|\hat{\boldsymbol g}_{c,i}^{t,m}\big\|^2\big]
\le
6L^2\, s_c^{t,m+1}
+
\underbrace{\Big(6\|\nabla f_c(\boldsymbol\theta_c^t)\|^2 + \frac{2dG_c^2}{B} + \frac{d^2L^2\mu^2}{2B} + 6\sigma_c^2 + 2\mu^2L^2\Big)}_{\displaystyle \triangleq \beta},
\end{aligned}
\end{equation}
by definition of $s_c^{t,\cdot}$, the term
$\frac{1}{N}\sum\nolimits_i \mathbb{E}_{t,m}\!\big[\|\boldsymbol\theta_{c}^{t}-\boldsymbol\theta_{c,i}^{t,m+1}\|^2\big]$
is identified with $s_c^{t,m+1}$.
Combining (\ref{eq:sc_tau_cs}) and (\ref{eq:second_moment_bound}) yields, for each $\tau$,
\begin{equation}\label{eq:sc_tau_recursion}
s_c^{t,\tau}
\le
6L^2\,\tau\,\eta_c^2 \sum\nolimits_{m=0}^{\tau-1} s_c^{t,m+1}
+
\tau^2 \eta_c^2\beta.
\end{equation}
By taking the sum over $\tau=1,\ldots,h$, we have
\begin{equation}\label{eq:sc_sum_recursion}
    \begin{aligned} 
        \sum\nolimits_{\tau=1}^h s_c^{t,\tau}
        &\le
        6L^2\,\eta_c^2 \sum\nolimits_{\tau=1}^h \tau \sum\nolimits_{m=0}^{\tau-1} s_c^{t,m+1}+ \eta_c^2\beta \sum\nolimits_{\tau=1}^h \tau^2\\
        &\le 3h^2L^2\,\eta_c^2 \sum\nolimits_{\tau=1}^h s_c^{t,\tau} + \frac{h(h+1)(2h+1)}{6}\,\eta_c^2 \beta \\&\le 3h^2L^2\,\eta_c^2 \sum\nolimits_{\tau=1}^h s_c^{t,\tau} + \frac{h^3\eta_c^2 \beta}{3},
    \end{aligned}
\end{equation}
where we utilized the fact that $\sum\nolimits_{\tau=1}^h \tau \le \frac{h(h+1)}{2} \le \frac{h^2}{2}$ and $\sum\nolimits_{\tau=1}^h \tau^2 = \frac{h(h+1)(2h+1)}{6} \le \frac{h^3}{3}$. By rearranging the terms, we have:
\begin{equation}
    \begin{aligned}
        &(1-3L^2h^2\eta_c^2)\sum\nolimits_{\tau=0}^h s_c^{t,\tau} 
        \le \frac{h^3\eta_c^2}{3}\big(6\|\nabla f_c(\boldsymbol\theta_c^t)\|^2 + \frac{2dG_c^2}{B} + \frac{d^2L^2\mu^2}{2B} + 6\sigma_c^2 + 2\mu^2L^2\big),\\
    \end{aligned}
\end{equation}
When $\eta_c\leq \frac{1}{3Lh}$, we have $1-3L^2h^2\eta_c^2\geq \frac{2}{3}$ and the lemma's proof is complete.
\end{proof}

\begin{lemma}[\textbf{Bound on the Client-Side Contribution}]\label{lemma:client_contribution_bound}
    Under Assumptions~\ref{assumption:smoothness}--\ref{assumption:bounded_variance}, and for a client learning rate $\eta_c$ satisfying the following conditions:
    \begin{equation}\label{eq:eta_c_conditions}
        \eta_c \le \min\big\{\frac{1}{3Lh}, \frac{2}{NLh^2}, \frac{N}{72L}\big\},
    \end{equation}
    the expectation of the client-side contribution, $\mathcal{C} = \nabla f(\boldsymbol{\theta}_{c}^{t})^{T}(\boldsymbol{\theta}_{c}^{t+1}-\boldsymbol{\theta}_{c}^{t})+\frac{L}{2}\|\boldsymbol{\theta}_{c}^{t+1}-\boldsymbol{\theta}_{c}^{t}\|^{2}$, is bounded as:
    \begin{equation}
        \mathbb{E}_{t}[\mathcal{C}] \le -\frac{\eta_{c}h}{4}\|\nabla f_{c}(\boldsymbol{\theta}_{c}^{t})\|^2 + \Phi_{c}(\eta_{c}),
    \end{equation}
    where $\Phi_{c}(\eta_{c})$ is an error term defined as $\Phi_{c}(\eta_{c}) := \eta_{c}^{2}\big(\frac{6hLdG_{c}^{2}}{N|\xi_{i}|}+\frac{18hL\sigma_{c}^{2}}{N}\big) + \eta_{c}\big(\frac{d^{2}L^{2}h\mu^{2}}{48|\xi_{i}|}+\frac{13hL^{2}\mu^{2}}{12}\big)$.
\end{lemma}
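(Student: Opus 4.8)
The plan is to write $\mathcal{C}=\mathcal{C}_1+\mathcal{C}_2$ with $\mathcal{C}_1=\nabla f(\boldsymbol\theta_c^t)^T(\boldsymbol\theta_c^{t+1}-\boldsymbol\theta_c^t)$ the linear (descent) term and $\mathcal{C}_2=\tfrac{L}{2}\|\boldsymbol\theta_c^{t+1}-\boldsymbol\theta_c^t\|^2$ the quadratic term, and to bound each separately using the aggregated update rule~\eqref{Eq:ClientUpdateGlobal}, $\boldsymbol\theta_c^{t+1}-\boldsymbol\theta_c^t=-\tfrac{\eta_c}{N}\sum_i\sum_m\hat{\boldsymbol g}_{c,i}^{t,m}$. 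The reduction to $\|\nabla f_c(\boldsymbol\theta_c^t)\|^2$ plus controllable residuals is driven by three ingredients already available: unbiasedness of the ZO estimator for the smoothed gradient (Lemma~\ref{lemma:gaussian_smoothing}), the second-moment bound (Lemma~\ref{lemma:zo_second_moment}), and the client-divergence bound (Lemma~\ref{lemma:client_bound}).

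For $\mathcal{C}_1$, take the conditional expectation so that, by Lemma~\ref{lemma:gaussian_smoothing}, $\mathbb{E}_t[\mathcal{C}_1]=-\eta_c\sum_{m=1}^h\nabla f_c(\boldsymbol\theta_c^t)^T\big(\tfrac1N\sum_i\nabla f_{c,i}^\mu(\boldsymbol\theta_{c,i}^{t,m})\big)$. I would decompose $\nabla f_{c,i}^\mu(\boldsymbol\theta_{c,i}^{t,m})=\nabla f_{c,i}(\boldsymbol\theta_c^t)+\big[\nabla f_{c,i}^\mu(\boldsymbol\theta_{c,i}^{t,m})-\nabla f_{c,i}(\boldsymbol\theta_{c,i}^{t,m})\big]+\big[\nabla f_{c,i}(\boldsymbol\theta_{c,i}^{t,m})-\nabla f_{c,i}(\boldsymbol\theta_c^t)\big]$, average over $i$, and use $\tfrac1N\sum_i\nabla f_{c,i}(\boldsymbol\theta_c^t)=\nabla f_c(\boldsymbol\theta_c^t)$ together with Young's inequality $-a^T(a+e)\le-\tfrac12\|a\|^2+\tfrac12\|e\|^2$. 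This gives $\mathbb{E}_t[\mathcal{C}_1]\le-\tfrac{\eta_c h}{2}\|\nabla f_c(\boldsymbol\theta_c^t)\|^2+\tfrac{\eta_c}{N}\sum_i\sum_m(\text{smoothing bias})^2+\tfrac{\eta_c L^2}{N}\sum_i\sum_m\mathbb{E}_t\|\boldsymbol\theta_{c,i}^{t,m}-\boldsymbol\theta_c^t\|^2$, where the bias is $O(\mu^2 L^2)$ by Lemma~\ref{lemma:gaussian_smoothing} and $L$-smoothness, and the last sum is exactly bounded by Lemma~\ref{lemma:client_bound}. For $\mathcal{C}_2$, write $\mathbb{E}_t[\mathcal{C}_2]=\tfrac{L\eta_c^2}{2}\mathbb{E}_t\big\|\tfrac1N\sum_i\sum_m\hat{\boldsymbol g}_{c,i}^{t,m}\big\|^2$, split each $\hat{\boldsymbol g}_{c,i}^{t,m}$ into its conditional mean $\nabla f_{c,i}^\mu(\boldsymbol\theta_{c,i}^{t,m})$ and a zero-mean estimation error; by independence across clients and the martingale structure in $m$ the error part contributes only $\tfrac{1}{N^2}\sum_i\sum_m\mathbb{E}_t\|\hat{\boldsymbol g}_{c,i}^{t,m}\|^2$ (bounded termwise by Lemma~\ref{lemma:zo_second_moment}, hence the $O(h/N)$ factors), while the mean part is reduced by Cauchy--Schwarz, $L$-smoothness, and $\tfrac1N\sum_i\nabla f_{c,i}=\nabla f_c$ to $\|\nabla f_c(\boldsymbol\theta_c^t)\|^2$, a smoothing-bias term, and the client divergence of Lemma~\ref{lemma:client_bound}.

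Finally, substitute Lemma~\ref{lemma:client_bound} into both $\mathbb{E}_t[\mathcal{C}_1]$ and $\mathbb{E}_t[\mathcal{C}_2]$ and collect terms. The coefficient of $\|\nabla f_c(\boldsymbol\theta_c^t)\|^2$ receives a $-\tfrac{\eta_c h}{2}$ from $\mathcal{C}_1$ and three positive contributions — directly from $\mathcal{C}_2$, from the $3h^3\eta_c^2\|\nabla f_c(\boldsymbol\theta_c^t)\|^2$ term inside Lemma~\ref{lemma:client_bound} entering $\mathcal{C}_1$, and the analogous term entering $\mathcal{C}_2$ — and the three ceilings $\eta_c\le\min\{\tfrac1{3Lh},\tfrac{2}{NLh^2},\tfrac{N}{72L}\}$ are chosen precisely so that each of these is at most $\tfrac{\eta_c h}{8}$-ish and their sum is $\le\tfrac{\eta_c h}{4}$, leaving a net coefficient $\le-\tfrac{\eta_c h}{4}$ (the first ceiling is also needed merely to invoke Lemma~\ref{lemma:client_bound}). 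All remaining terms — the $\eta_c^2$ contributions of order $h/N$ carrying $dG_c^2/|\xi_i|$ and $\sigma_c^2$, and the $\eta_c$-linear contributions carrying $\mu^2 L^2$ and $d^2L^2\mu^2/|\xi_i|$ from the ZO smoothing — are gathered into $\Phi_c(\eta_c)$. I expect the main obstacle to be exactly this last bookkeeping step: tracking every place where $\|\nabla f_c(\boldsymbol\theta_c^t)\|^2$ re-enters (Lemma~\ref{lemma:client_bound} is self-referential through the ZO second moment) and checking that the specific numerical constants in the three learning-rate conditions are tight enough to absorb all of them; the ZO-specific $d$- and $\mu$-dependent terms need careful handling but do not interact with the descent term.
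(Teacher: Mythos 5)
Your overall plan coincides with the paper's proof: split $\mathcal{C}$ into the linear term $\mathcal{C}_1$ and the quadratic term $\mathcal{C}_2$, use unbiasedness of the ZO estimator for the smoothed gradient, bound the error part of $\mathcal{C}_2$ with the second-moment lemma (Lemma~\ref{lemma:zo_second_moment}), plug in the client-divergence lemma (Lemma~\ref{lemma:client_bound}), and absorb all re-entering $\|\nabla f_c(\boldsymbol\theta_c^t)\|^2$ terms via the learning-rate ceilings. However, there is one concrete step where your variant fails with the stated constants. In $\mathcal{C}_1$ you apply $-a^{T}(a+e)\le-\tfrac12\|a\|^2+\tfrac12\|e\|^2$, which discards any negative term in the aggregated update direction. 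The paper instead uses the polarization identity $2\langle a,b\rangle=\|a\|^2+\|b\|^2-\|a-b\|^2$ and \emph{keeps} the term $-\tfrac{\eta_c h}{2}\,\mathbb{E}_t\big[\big\|\tfrac{1}{Nh}\sum_{i,m}\hat{\boldsymbol g}_{c,i}^{t,m}\big\|^2\big]$; this negative term is precisely what cancels the mean part $\eta_c^2 L\,\mathbb{E}_t\big[\big\|\tfrac1N\sum_{i,m}\nabla f_{c,i}^{\mu}(\boldsymbol\theta_{c,i}^{t,m})\big\|^2\big]$ of $\mathcal{C}_2$ under $\eta_c\le\tfrac{1}{2hL}$ (implied by $\eta_c\le\tfrac{1}{3Lh}$), so that no gradient-norm term of size $\mathcal{O}(L\eta_c^2h^2)$ ever has to be absorbed.

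Because you drop that term, your plan for the mean part of $\mathcal{C}_2$ — "reduced by Cauchy--Schwarz, $L$-smoothness, and $\tfrac1N\sum_i\nabla f_{c,i}=\nabla f_c$ to $\|\nabla f_c(\boldsymbol\theta_c^t)\|^2$ plus residuals" — re-introduces a contribution of roughly $2L\eta_c^2h^2\,\|\nabla f_c(\boldsymbol\theta_c^t)\|^2$ (or $3L\eta_c^2h^2$ with the three-term split). Absorbing this below, say, $\tfrac{\eta_c h}{8}$ would require $\eta_c\lesssim\tfrac{1}{16Lh}$, but the stated conditions only yield $\eta_c\le\tfrac{1}{6Lh}$ even when you combine $\eta_c\le\tfrac{2}{NLh^2}$ with $\eta_c\le\tfrac{N}{72L}$ (their product gives $\eta_c^2\le\tfrac{1}{36L^2h^2}$); this allows the extra term to be as large as $\tfrac{\eta_c h}{3}$, which, together with the other positive gradient-norm contributions you must still absorb, leaves no room to reach the claimed net coefficient $-\tfrac{\eta_c h}{4}$. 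So your "each of these is at most $\tfrac{\eta_c h}{8}$-ish" claim does not hold for this particular term under \eqref{eq:eta_c_conditions}. The fix is exactly the paper's device: retain the negative quadratic term from the polarization identity in $\mathcal{C}_1$ and use it to make the coefficient $(\eta_c^2L-\tfrac{\eta_c}{2h})$ of the mean part nonpositive; the remainder of your bookkeeping (second-moment bound for the error part, client drift from Lemma~\ref{lemma:client_bound}, and collection of the $\sigma_c^2$, $dG_c^2/B$, $\mu^2L^2$, and $d^2L^2\mu^2/B$ residuals into $\Phi_c$) then goes through as in the paper.
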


\begin{proof}
Based on the client-side update rule and taking expectation over all randomness up to round $t$, we expand $\mathbb{E}_{t}[\mathcal{C}]$ into two terms:
\begin{equation}\label{eq:client-side}
\begin{aligned}
    \mathbb{E}_{t}[\mathcal{C}] = {-\frac{\eta_c}{N} \big\langle\nabla f(\boldsymbol{\theta}_{c}^{t}), \mathbb{E}_{t}\big[\sum\nolimits_{i=1}^{N}\sum\nolimits_{m=1}^{h}\hat{\boldsymbol{g}}_{c,i}^{t,m}\big]\big\rangle} + {\frac{\eta_c^2 L}{2N^2} \mathbb{E}_{t}\big[\big\|\sum\nolimits_{i=1}^{N}\sum\nolimits_{m=1}^{h}\hat{\boldsymbol{g}}_{c,i}^{t,m}\big\|^{2}\big]}.
\end{aligned}
\end{equation}
We set the first term and the second term on the right hand side of (\ref{eq:client-side}) as $\mathcal{C}_1$ and $\mathcal{C}_2$ respectively.
Using the identity $2\langle a,b\rangle = \|a\|^2+\|b\|^2-\|a-b\|^2$, we rewrite $\mathcal{C}_{1}$ as:
\begin{equation}
\begin{aligned}
    \mathcal{C}_{1} = -\frac{\eta_{c}h}{2}\|\nabla f(\boldsymbol{\theta}_{c}^{t})\|^{2}-\frac{\eta_{c}h}{2}\mathbb{E}_{t}\big[\big\|\frac{1}{Nh}\sum\nolimits_{i=1}^{N}\sum\nolimits_{m=1}^{h}\hat{\boldsymbol{g}}_{c,i}^{t,m}\big\|^{2}\big] +\frac{\eta_{c}h}{2}\mathcal{C}_{1,1},
\end{aligned}
\end{equation}
where $\mathcal{C}_{1,1} \triangleq \mathbb{E}_{t}[\|\frac{1}{Nh}\sum\nolimits_{i,m}(\hat{\boldsymbol{g}}_{c,i}^{t,m}-\nabla f(\boldsymbol{\theta}_{c}^{t}))\|^{2}]$. We bound $\mathcal{C}_{1,1}$ using Jensen's inequality, the triangle inequality, and Assumptions 1 and 3:
\begin{equation}
    \begin{aligned}
    \mathcal{C}_{1,1} \le& \frac{1}{Nh}\mathbb{E}_{t}\big[\sum\nolimits_{i=1}^{N}\sum\nolimits_{m=1}^{h}\|\hat{\boldsymbol{g}}_{c,i}^{t,m}-\nabla f(\boldsymbol{\theta}_{c}^{t})\|^{2}\big] \\
    \le& \frac{2}{Nh}\mathbb{E}_{t}\big[\sum\nolimits_{i=1}^{N}\sum\nolimits_{m=1}^{h}\|\hat{\boldsymbol{g}}_{c,i}^{t,m}-\nabla f(\boldsymbol{\theta}_{c,i}^{t,m})\|^{2}\big] + \frac{2}{Nh}\mathbb{E}_{t}\big[\sum\nolimits_{i=1}^{N}\sum\nolimits_{m=1}^{h}\|\nabla f(\boldsymbol{\theta}_{c,i}^{t,m})-\nabla f(\boldsymbol{\theta}_{c}^{t})\|^{2}\big] \\
    \le& 2\sigma^2 + \frac{2L^2}{Nh}\mathbb{E}_{t}\big[\sum\nolimits_{i=1}^{N}\sum\nolimits_{m=1}^{h}\|\boldsymbol{\theta}_{c,i}^{t,m}-\boldsymbol{\theta}_{c}^{t}\|^{2}\big].
    \end{aligned}
\end{equation}
Substituting this back provides a bound on $\mathcal{C}_1$. For $\mathcal C_2$, according to Cauchy-Schwartz inequality, we have:
\begin{equation}
    \begin{aligned} 
    \mathcal C_2    
    \leq& \eta_c^2L {\mathbb{E}_t\big[\big\|\frac{1}{N}\sum\nolimits_{i=1}^{N} \sum\nolimits_{m=1}^h (\hat{\boldsymbol{g}}_{c,i}^{t,m} - \nabla f_{c,i}^\mu(\boldsymbol\theta_{c,i}^{t,m})) \big\|^2\big] }
    \\&+ {\eta_c^2L} {\mathbb{E }_t\big[\big\|\frac{1}{N}\sum\nolimits_{i=1}^{N} \sum\nolimits_{m=1}^h \nabla f_{c,i}^\mu(\boldsymbol\theta_{c,i}^{t,m})\big\|^2\big]},\\
    \end{aligned}
\end{equation}
where we set the first term on the right hand side as ${\mathcal{C}_{2,1}}$.
Since the gradient estimation errors are independent across clients and have zero mean, the expectations of the cross-terms vanish. Thus, $\mathcal{C}_{2,1}$ simplifies to:
\begin{equation}
    \mathcal{C}_{2,1} = \frac{1}{N^2}\sum\nolimits_{i=1}^{N}\mathbb{E}_t\big[\big\|\sum_{m=1}^{h} \big(\hat{\boldsymbol{g}}_{c,i}^{t,m} - \nabla f_{c,i}^\mu(\boldsymbol\theta_{c,i}^{t,m})\big) \big\|^2\big],
\end{equation}
and according to (\ref{Eq:UnbiasedEstimator}) and Lemma 2 in \cite{wang2021novel}, we have:
\begin{equation}
    \begin{aligned}
    \mathcal{C}_{2,1} =& \frac{1}{N^2}\sum\nolimits_{i=1}^{N} \sum\nolimits_{m=1}^{h} \mathbb{E}_{t,m}\big[\big\| \hat{\boldsymbol{g}}_{c,i}^{t,m} - \nabla f_{c,i}^\mu(\boldsymbol\theta_{c,i}^{t,m}) \big\|^2\big]\\
    \overset{(a)}{\leq}& \frac{1}{N^2}\sum\nolimits_{i=1}^{N} \sum\nolimits_{m=1}^{h} \mathbb{E}_{t,m}\big[\|\hat{\boldsymbol{g}}_{c,i}^{t,m} \|^2\big],
    \end{aligned}
\end{equation}
where $(a)$ holds because $\mathbb{E}[\|\boldsymbol{a}-\mathbb{E}[\boldsymbol{a}]\|^2]\leq\mathbb{E}[\|\boldsymbol{a}\|^2]$. Now by applying the second-moment bound from Lemma~\ref{lemma:zo_second_moment}, and substituting these result back, we have:
\begin{equation}
    \begin{aligned} 
    \mathcal C_2 \leq& \eta_c^2L \mathcal{C}_{2,1} + {\eta_c^2L} {\mathbb{E }_t\big[\big\|\frac{1}{N}\sum\nolimits_{i=1}^{N} \sum\nolimits_{m=1}^h \nabla f_{c,i}^\mu(\boldsymbol\theta_{c,i}^{t,m})\big\|^2\big]},\\
    \end{aligned}
\end{equation}
where $\mathcal{C}_{2,1}$ is bounded as follows:
\begin{equation}
    \begin{aligned}
    \mathcal{C}_{2,1} \leq& \frac{6L^2}{N^2}\sum\nolimits_{i=1}^N\sum\nolimits_{m=1}^h\mathbb{E}_{t, m-1}\big[\Vert \boldsymbol \theta_c^t-\boldsymbol\theta_{c,i}^{t,m}\Vert^2\big] + \frac{6h}{N}\Vert\nabla f_c(\boldsymbol{\theta}_c^t)\Vert^2 + \frac{2dG_c^2h}{NB} + \frac{d^2L^2\mu^2h}{2NB} \\&+ \frac{(6\sigma_c^2+2\mu^2L^2)h}{N}\\
    \leq& \frac{6L^2}{N}\mathbb{E}_t\big[\frac{1}{N}\sum\nolimits_{i=1}^N\sum\nolimits_{m=1}^h\Vert \boldsymbol \theta_c^t-\boldsymbol\theta_{c,i}^{t,m}\Vert^2\big]  + \frac{6h}{N}\Vert\nabla f_c(\boldsymbol{\theta}_c^t)\Vert^2+ \frac{2dG_c^2h}{NB} + \frac{d^2L^2\mu^2h}{2NB} \\&+ \frac{(6\sigma_c^2+2\mu^2L^2)h}{N}.
    \end{aligned}   
\end{equation}

Then, by combining the bounds of $\mathcal \mathcal C^{\prime}_1$ and $\mathcal C_2$, we have:
\begin{equation}
    \begin{aligned}
    &\mathbb{E}_t[\mathcal{C}]\\ 
    \leq&
        (\frac{6\eta_c^2Lh}{N}-\frac{\eta_c h}{2})\Vert\nabla f_c(\boldsymbol{\theta}_c^t)\Vert^2+ \frac{(6\sigma_c^2L+2\mu^2L^3)\eta_c^2h}{N}+ \eta_c hL^2\mu^2 + \frac{2\eta_c^2LdG_c^2h}{NB} + \frac{\eta_c^2d^2L^3\mu^2h}{2NB}
        \\&+(\eta^2L-\frac{\eta_c }{2h}){\mathbb{E }_t\big[\big\|\frac{1}{N}\sum\nolimits_{i=1}^{N} \sum\nolimits_{m=1}^h \nabla f_{c,i}^\mu(\boldsymbol\theta_{c,i}^{t,m})\big\|^2\big]} \\
        & + (\eta_c L^2 + \frac{6\eta_c^2L^3}{N})\mathbb{E}_t\big[\frac{1}{N}\sum\nolimits_{i=1}^{N} \sum\nolimits_{m=1}^h \|\boldsymbol\theta_{c,i}^{t,m} - \boldsymbol\theta_\text{c}^t\|^2\big]  \\
    \overset{(a)}{\leq}&
        (\frac{6\eta_c^2Lh}{N}-\frac{\eta_c h}{2})\Vert\nabla f_c(\boldsymbol{\theta}_c^t)\Vert^2 + \frac{(6\sigma_c^2L+2\mu^2L^3)\eta_c^2h}{N}+ \eta_c hL^2\mu^2 + \frac{2\eta_c^2LdG_c^2h}{NB} + \frac{\eta_c^2d^2L^3\mu^2h}{2NB}
        \\&+ (\eta_c L^2 + \frac{6\eta_c^2L^3}{N^2})\mathbb{E}_t\big[\frac{1}{N}\sum\nolimits_{i=1}^{N} \sum\nolimits_{m=1}^h \|\boldsymbol\theta_{c,i}^{t,m} - \boldsymbol\theta_\text{c}^t\|^2\big].
    \end{aligned}
\end{equation}
where $(a)$ holds if and only if $\eta_c\leq \frac{1}{2hL}$, which means term $(\eta^2L-\frac{\eta_c }{2h}){\mathbb{E }_t[\|\frac{1}{N}\sum\nolimits_{i=1}^{N} \sum\nolimits_{m=1}^h \nabla f_{c,i}^\mu(\boldsymbol\theta_{c,i}^{t,m})\|^2]}$ is non-positive.

Finally, we substitute the bound on the client model divergence from Lemma~\ref{lemma:client_bound} into the expression for $\mathbb{E}_t[\mathcal{C}]$:
\begin{equation}
    \begin{aligned}
    \mathbb{E}_t[\mathcal{C}]
    \leq& \big(\frac{6\eta_c^2Lh}{N}-\frac{\eta_c h}{2}\big)\Vert\nabla f_c(\boldsymbol{\theta}_c^t)\Vert^2+ \eta_c hL^2\mu^2 + \frac{2\eta_c^2LdG_c^2h}{NB} + \frac{\eta_c^2d^2L^3\mu^2h}{2NB} + \frac{(6\sigma_c^2L+2\mu^2L^3)\eta_c^2h}{N} \\
    &+(\eta_c L^2 + \frac{6\eta_c^2L^3}{N}) \big(3h^3\eta_c^2\Vert\nabla f_c(\boldsymbol{\theta}_c^t)\Vert^2 + \frac{dG_c^2h^3\eta_c^2}{B} + \frac{d^2L^2\mu^2h^3\eta_c^2}{4B} + \frac{(6\sigma_c^2+2\mu^2L^2)h^3\eta_c^2}{2}\big).
    \end{aligned}
\end{equation}
To simplify this complex expression, we collect the coefficients for the dominant term, $\|\nabla f_c(\boldsymbol{\theta}_c^t)\|^2$, and the remaining bias terms. We set $\alpha \triangleq \eta_ch^3L^2 + \frac{6\eta_c^2h^3L^3}{N}$ to group terms originating from the client drift bound, the bound on $\mathbb{E}_t[\mathcal{C}]$ can be rewritten as:
\begin{equation}
    \begin{aligned}
    \mathbb{E}_t[\mathcal{C}] \le&  \big( \big(\frac{6Lh}{N} + 3\alpha \big)\eta_c^2 -\frac{\eta_c h}{2}\big)\Vert\nabla f_c(\boldsymbol{\theta}_c^t)\Vert^2+ \eta_c hL^2\mu^2 + \alpha\big(\frac{dG_c^2\eta_c^2}{B} + \frac{d^2L^2\mu^2\eta_c^2}{4B} + \frac{(6\sigma_c^2+2\mu^2L^2)\eta_c^2}{2}\big) \\
    & + \frac{2\eta_c^2LdG_c^2h}{NB} + \frac{\eta_c^2d^2L^3\mu^2h}{2NB} + \frac{(6\sigma_c^2L+2\mu^2L^3)\eta_c^2h}{N}.
    \end{aligned}
\end{equation}
Under sufficiently small learning rate $\eta_c$, the negative term $-\frac{\eta_c h}{2}\|\nabla f_c(\boldsymbol{\theta}_c^t)\|^2$ will dominate the other terms multiplying the squared gradient norm. Specifically, by setting conditions on $\eta_c$ such that $\big(\frac{6Lh}{N} + 3\alpha \big)\eta_c^2 \le \frac{\eta_c h}{4},(\text{e.g., satisfied if } \eta_c \le \mathcal{O}(\frac{N}{Lh^2}))$, 
we can simplify the bound on the gradient term to $-\frac{\eta_c h}{4}\|\nabla f_c(\boldsymbol{\theta}_c^t)\|^2$. After collecting all remaining bias and variance terms, we arrive at the final simplified bound:
\begin{equation}
\begin{aligned}
    \mathbb{E}_t[\mathcal{C}] \leq& -\frac{\eta_c h}{4}\Vert\nabla f_c(\boldsymbol{\theta}_c^t)\Vert^2 + \eta_c^2\big(\frac{6hLdG_c^2}{NB} + \frac{18hL\sigma^2_c}{N}\big) + \eta_c\big(\frac{d^2L^2h\mu^2}{48B} + \frac{13hL^2\mu^2}{12}\big),
\end{aligned}
\end{equation}
then the proof is complete.
\end{proof}

\subsubsection{Proof of Theorem \ref{theorem:convergence_rate_scratch}}
Now, we are ready to present the proof of the main theorem with the above lemmas. Based on (\ref{eq:decompose}), since we have already bounded $\mathbb{E}_t[\mathcal C]$ in Lemma \ref{lemma:client_bound}, we now focus on bounding $\mathbb{E}_t[\mathcal S]$. 
For the server-side model, we have:

\begin{equation}\label{eq:server-decouple}
    \begin{aligned}
            \mathbb{E}_t[\mathcal S]&= \mathbb{E}_t\big[\nabla f_s(\boldsymbol\theta_\text{s}^t;\boldsymbol\theta_\text{c,:}^*)^T (\boldsymbol\theta_\text{s}^{t+1} - \boldsymbol\theta_\text{s}^t) + \frac{L}{2} \|\boldsymbol\theta_\text{s}^{t+1} - \boldsymbol\theta_\text{s}^t\|^2\big]\\
    =& \big\langle \nabla f_s(\boldsymbol\theta_\text{s}^t;\boldsymbol\theta_\text{c,:}^*), \mathbb{E}_t[\boldsymbol\theta_\text{s}^{t+1} - \boldsymbol\theta_\text{s}^t]\big\rangle + \frac{L}{2} \mathbb{E}_t\big[\|\boldsymbol\theta_\text{s}^{t+1} - \boldsymbol\theta_\text{s}^t\|^2\big]\\
    \overset{(a)}{=} &  
    {\big\langle\nabla f_s(\boldsymbol\theta_\text{s}^t;\boldsymbol\theta_\text{c,:}^*), - \eta_s\mathbb{E}_t\big[ \sum\nolimits_{i=1}^N \nabla f_s(\boldsymbol\theta_{s,i}^t;\boldsymbol\theta_\text{c,:}^t)\big]\big\rangle}
    + \frac{L\eta_s^2}{2} {\mathbb{E}_t\big[\big\| \sum\nolimits_{i=1}^N \nabla f_s(\boldsymbol\theta_{s,i}^t;\boldsymbol\theta_\text{c,:}^t)\big\|^2\big]},
    \end{aligned}
\end{equation}
where $(a)$ holds because of the update rule (\ref{Eq:ServerUpdate}) of the server-side model. 
At this part, we follow the same steps as in \cite{mu2025federated} to bound $\mathbb{E}_t[\mathcal S]$. So with additional Assumption \ref{assumption:distribution_drift}, based on the theoretical results of the server-side model, we can derive the following bound of $\mathbb{E}_t[\mathcal S]$:
\begin{equation}\label{eq:server_bound}
    \begin{aligned}
        \mathbb{E}_t[\mathcal{S}]\overset{(a)}{\leq}&  \eta_sG_s^2\sum\nolimits_{i=1}^{N} d_{c,i}^t - \frac{\eta_s(2N-1)}{4}\|\nabla f_s(\boldsymbol\theta_{s}^t)\|^2 + \frac{L}{2}N^2\eta_s^2G_s^2,
    \end{aligned}
\end{equation}
where $(a)$ holds if and only if $\eta_s\leq \frac{1}{NL}$, which means the term $(\frac{L\eta_s^2}{2}-\frac{\eta_s}{2N})\mathbb{E}_t[\|\sum\nolimits_{i=1}^{N}\nabla f_s(\boldsymbol\theta_{s,i}^t)\|^2]$ is non-positive.

Combining the bounds of $\mathbb{E}_t[\mathcal C]$ and $\mathbb{E}_t[\mathcal S]$, we have:
\begin{equation}
    \begin{aligned}
        \mathbb{E}_t[\mathcal{C} + \mathcal{S}]
        \leq& -\frac{\eta_c h}{4}\Vert\nabla  f_c(\boldsymbol{\theta}_c^t)\Vert^2 + \Phi_c(\eta_c) + \frac{\eta_sG_s^2}{2N} \sum\nolimits_{i=1}^{N} d_{c,i}^t  -\frac{\eta_s(2N-1)}{4}\|\nabla f_s(\boldsymbol\theta_{s}^t)\|^2+ \frac{L}{2}N^2\eta_s^2G_s^2,
    \end{aligned}
\end{equation}
with $\eta_c\leq\min\{\frac{1}{3Lh}, \frac{2}{NLh^2}, \frac{N}{72L}\}$ we have:
\begin{equation}
    \begin{aligned}
        &\mathbb E\big[f(\boldsymbol\theta_\text{g}^{t+1}) \big]\\
        \leq & f(\boldsymbol\theta_\text{g}^t) -\frac{\eta_c h}{4}\Vert\nabla f_c(\boldsymbol{\theta}_c^t)\Vert^2+ \Phi_c(\eta_c)  + \frac{\eta_sG_s^2}{2N} \sum\nolimits_{i=1}^{N} d_{c,i}^t -\frac{\eta_s(2N-1)}{4}\|\nabla f_s(\boldsymbol\theta_{s}^t)\|^2 + \frac{L}{2}N^2\eta_s^2G_s^2.
    \end{aligned}
\end{equation}
Rearranging the terms and applying $\min\{a,b\}(x+y) \le ax+by$, we have:
\begin{equation}
\begin{aligned}
        \Vert\nabla f(\boldsymbol{\theta}^t_\text{g})\Vert^2
        &\leq \frac{1}{\min\{ \frac{\eta_c h}{4},  \frac{\eta_s(2N-1)}{4}\}}\big(f(\boldsymbol\theta_\text{g}^t) - \mathbb E\big[f(\boldsymbol\theta_\text{g}^{t+1}) \big] + \Phi_c(\eta_c)+ \frac{\eta_s}{2N} \sum\nolimits_{i=1}^{N} G_s^2d_{c,i}^t+ \frac{L}{2}N^2\eta_s^2G_s^2\big),\\
         &\eta_c\leq \min\{\frac{1}{3Lh}, \frac{2}{NLh^2}, \frac{N}{72L}\}, \forall t\in[T].
\end{aligned}
\end{equation}

Taking the full expectation, summing over $t=1,\dots,T$, and defining a coupled step size $\eta \triangleq \frac{\eta_c h}{4} = \frac{\eta_s(2N-1)}{4}$ to balance the updates, we simplify the bound to:
\begin{equation}
\begin{aligned}
    \min_{t\in[T]}\mathbb{E}\big[\Vert\nabla f(\boldsymbol{\theta}^t_\text{g})\Vert^2\big]
    \leq& \frac{f(\boldsymbol\theta_\text{g}^1)-f^*}{\eta T} + \mathcal C^{\prime}_1 \eta + C_2 \mu^2 + \frac{C_3}{N^2},
\end{aligned}
\end{equation}
where $\mathcal C^{\prime}_1 \triangleq \mathcal{O}\big(\frac{LG_s^2}{N} + \frac{LdG_c^2}{hNB} + \frac{L\sigma_c^2}{hN}\big)$ denotes the variance term, $C_2 \triangleq \mathcal{O}\big(\frac{d^2L^2}{B}\big)$ represents the ZO bias, and $C_3$ groups the residual drift terms.
By selecting the step sizes and smoothing parameter asthen we should have
$\eta = \mathcal{O}(\sqrt{\frac{hNB}{dT}})$,
$\eta_c=\mathcal{O}(\sqrt{\frac{NB}{dhT}})$,
$\eta_s=\mathcal{O}(\sqrt{\frac{hB}{dNT}})$,
and $\mu=\mathcal{O}((dhNBT)^{-1/4})$. Consequently, the convergence rate is bounded by:
\begin{equation}
    \min_{t\in[T]}\mathbb{E}\big[\Vert\nabla f(\boldsymbol{\theta}^t_\text{g})\Vert^2\big]
    \leq \mathcal{O}\big(\sqrt{\frac{d}{hNBT}}\big) + \mathcal{O}\big(\frac{1}{\sqrt{dhNBT}}\big).
\end{equation}
This completes the proof.

\subsection{Proof of Theorem \ref{theorem:low_rank}}
In this section, we consider the convergence behavior from the perspective of the language model fine-tuning situation. Since the loss landscape of deep learning lies in a very low-dimensional subspace, where the Hessian of the loss has a remarkably low effective rank, we can leverage this property to analyze the convergence rates more effectively. 
\begin{lemma}[\textbf{Client-side bound with low effective-rank}]
\label{lemma:li2024}
Under Assumptions \ref{assumption:smoothness}--\ref{assumption:low_rank}, and drawing $\boldsymbol{u}_i^t$ uniformly from the sphere of radius $\sqrt{d}$, the client-side contribution is bounded by:
\begin{equation}
    \begin{aligned}
        \mathbb{E}[\mathcal{C}] \leq & -\frac{\eta_c}{4}\|\nabla f_c(\boldsymbol{\theta}_c^t)\|^2 + \frac{\eta_c\mu^2L^2}{8}(d+3)^3 \\
        & + \eta_c^2L^4\mu^2d^3 + \eta_c^3L^2 \rho^2 + \frac{\eta_c^2 L \rho}{N}(\sigma^2+G_s^2),
    \end{aligned}
\end{equation}
where we define the geometric factor $\rho \triangleq 1+\frac{d\kappa+d-2}{d+2}$.
\end{lemma}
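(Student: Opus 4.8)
The plan is to reuse the scaffolding of Lemma~\ref{lemma:client_contribution_bound}'s proof (decompose $\mathcal{C}$ into a first-order inner-product term and a second-order curvature term, then balance them), but to replace the two places where the ambient dimension $d$ is injected---the global quadratic upper bound $\tfrac{L}{2}\|\cdot\|^2$ and the $\mathcal{O}(d)$ second-moment estimate of the ZO estimator---by their curvature-aware refinements afforded by Assumption~\ref{assumption:low_rank}. Write $\boldsymbol\Delta := \boldsymbol\theta_c^{t+1}-\boldsymbol\theta_c^t = -\tfrac{\eta_c}{N}\sum_{i=1}^N\hat{\boldsymbol g}_{c,i}^t$, with each $\hat{\boldsymbol g}_{c,i}^t$ built from a perturbation $\boldsymbol u$ drawn uniformly from the radius-$\sqrt d$ sphere, so that $\mathbb{E}[\boldsymbol u\boldsymbol u^\top]=I$ and $\hat{\boldsymbol g}=\tfrac{\boldsymbol u}{\mu}\big(f(\boldsymbol\theta+\mu\boldsymbol u)-f(\boldsymbol\theta)\big)$. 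The first modification is to Taylor-expand $f$ along the update direction and invoke the local Hessian domination of Assumption~\ref{assumption:low_rank}, $\nabla^2 l_l(\zeta)\preceq H_l(\boldsymbol\theta_{l,i}^t)\preceq L I_{d_l}$ on the ball of radius $2\eta_c d_l G_t$; after first checking that both the perturbed point $\boldsymbol\theta+\mu\boldsymbol u$ (using $\|\boldsymbol u\|=\sqrt d$ and the prescribed $\mu\le\kappa^{1/4}/\big((NT)^{1/4}\sqrt{(d+3)^3}\big)$) and the updated point $\boldsymbol\theta-\eta_c\hat{\boldsymbol g}$ (using $\|\hat{\boldsymbol g}\|\lesssim d_l G_t$ from Assumption~\ref{assumption:bounded_gradients}) stay inside this ball, we obtain $\mathcal{C}\le\langle\nabla f_c(\boldsymbol\theta_c^t),\boldsymbol\Delta\rangle+\tfrac12\,\boldsymbol\Delta^\top H_l\,\boldsymbol\Delta$ instead of the cruder bound with $\tfrac{L}{2}\|\boldsymbol\Delta\|^2$.

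The heart of the argument is the effective-rank second-moment computation. Expanding $f(\boldsymbol\theta+\mu\boldsymbol u)-f(\boldsymbol\theta)=\mu\nabla f^\top\boldsymbol u+\tfrac{\mu^2}{2}\boldsymbol u^\top\nabla^2 f(\zeta)\boldsymbol u$ yields $\hat{\boldsymbol g}=(\boldsymbol u\boldsymbol u^\top)\nabla f+\mathcal{O}(\mu)$, and the radius-$\sqrt d$ sphere moments $\mathbb{E}[u_i^4]=\tfrac{3d}{d+2}$, $\mathbb{E}[u_i^2u_j^2]=\tfrac{d}{d+2}$ ($i\neq j$) give, to leading order, $\mathbb{E}_{\boldsymbol u}[\hat{\boldsymbol g}\hat{\boldsymbol g}^\top]\preceq\tfrac{d}{d+2}\big(\|\nabla f_c\|^2 I+2\,\nabla f_c\nabla f_c^\top\big)+(\text{smoothing bias})$. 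Contracting with $H_l$ and using $\mathrm{tr}(H_l)\le\kappa\|H_l\|_2\le\kappa L$ together with $\nabla f_c^\top H_l\nabla f_c\le L\|\nabla f_c\|^2$ collapses the curvature term's coefficient to $L\cdot\tfrac{d(\kappa+2)}{d+2}\|\nabla f_c\|^2=L\rho\|\nabla f_c\|^2$, which is exactly $\rho=1+\tfrac{d\kappa+d-2}{d+2}$---this is the step that trades the ambient $d$ for the effective rank $\kappa$. Independence of the $N$ client perturbations supplies the $\tfrac1N$ factor on the stochastic part, so that with Assumptions~\ref{assumption:bounded_gradients}--\ref{assumption:bounded_variance} the variance collapses to $\tfrac{\eta_c^2 L\rho}{N}(\sigma^2+G_s^2)$, while the $\mathcal{O}(\mu)$ and $\mathcal{O}(\mu^2)$ Taylor remainders---bounded crudely via $\|\boldsymbol u\|^2=d$, $\|\nabla^2 f\|\le L$, and $L$-smoothness---produce the $\eta_c^2 L^4\mu^2 d^3$ term and, after rescaling by the step and the estimator norm, the $\eta_c^3 L^2\rho^2$ term.

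For the first-order term, unbiasedness~\eqref{Eq:UnbiasedEstimator} gives $\langle\nabla f_c,\boldsymbol\Delta\rangle=-\eta_c\langle\nabla f_c,\nabla f_c^\mu\rangle$; splitting $\nabla f_c^\mu=\nabla f_c+(\nabla f_c^\mu-\nabla f_c)$ and applying Young's inequality with the smoothing-bias estimate $\|\nabla f_c^\mu-\nabla f_c\|\le\tfrac{\mu L}{2}(d+3)^{3/2}$ from~\cite{nesterov2017random} yields $-\tfrac{\eta_c}{2}\|\nabla f_c\|^2+\tfrac{\eta_c\mu^2 L^2}{8}(d+3)^3$. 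Adding the second-moment bound, the coefficient of $\|\nabla f_c\|^2$ becomes $-\tfrac{\eta_c}{2}+\tfrac{\eta_c^2 L\rho}{2}$, which is at most $-\tfrac{\eta_c}{4}$ under the prescribed step-size restriction ($\eta_c L\rho\le\tfrac12$); collecting the remaining smoothing-bias, variance, and higher-order pieces gives the claimed inequality. (If more than one local step is used, the inner-loop drift is absorbed exactly as in Lemma~\ref{lemma:client_bound}.)

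The main obstacle is the second step: one must carry the \emph{entire} Taylor remainder, not merely its quadratic part, and verify that the mean-value point $\zeta$ remains in the region where Assumption~\ref{assumption:low_rank}'s Hessian domination applies---which is precisely what forces the smallness condition on $\mu$---while simultaneously tracking the cross terms between the leading $(\boldsymbol u\boldsymbol u^\top)\nabla f$ contribution and the $\mathcal{O}(\mu)$ curvature contribution, and matching the resulting $d$-polynomial remainders to the exact error terms in the statement. Because $\hat{\boldsymbol g}$ enters the curvature term quadratically, this bookkeeping has to be carried out at the same time as the variance splitting across the $N$ clients; once those moment estimates are in place, the rest is a routine adaptation of the first-order-style descent argument used for Lemma~\ref{lemma:client_contribution_bound}.
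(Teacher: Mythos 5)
Your proposal takes a genuinely different route from the paper's. The paper does not re-derive the effective-rank bound at all: it imports the key inequality wholesale from Theorem 2 (Eq.~(69)) of \cite{li2024achieving}, namely $\mathbb{E}[\mathcal C]\le -\frac{\eta_c}{2}\|\nabla f_c(\boldsymbol\theta_c^t)\|^2+\frac{\eta_c\mu^2L^2}{8}(d+3)^3+\eta_c^2L^4\mu^2d^3+\eta_c^2L\rho\big(\|\nabla f_c(\boldsymbol\theta_c^t)\|+\frac{\sigma^2+G_s^2}{N}\big)$, and then the entire proof consists of one step: absorbing the term that is \emph{linear} in $\|\nabla f_c\|$ via Young's inequality with $\delta=2\eta_c$, which turns $-\frac{\eta_c}{2}$ into $-\frac{\eta_c}{4}$ for any $\eta_c>0$ at the price of the surcharge $\eta_c^3L^2\rho^2$. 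You instead reconstruct the imported inequality from scratch, and your central computation is right: the radius-$\sqrt d$ sphere moments giving $\mathbb{E}[(\boldsymbol u^\top \boldsymbol g)^2\boldsymbol u\boldsymbol u^\top]=\frac{d}{d+2}(\|\boldsymbol g\|^2I+2\boldsymbol g\boldsymbol g^\top)$, the contraction with $H_l$ via $\mathrm{tr}(H_l)\le\kappa L$ and $\nabla f^\top H_l\nabla f\le L\|\nabla f\|^2$, and the identity $\frac{d(\kappa+2)}{d+2}=1+\frac{d\kappa+d-2}{d+2}=\rho$ are exactly the mechanism by which the ambient $d$ is traded for $\kappa$; likewise your smoothing-bias step $\|\nabla f^\mu-\nabla f\|\le\frac{\mu L}{2}(d+3)^{3/2}$ correctly reproduces the $\frac{\eta_c\mu^2L^2}{8}(d+3)^3$ term. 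What your route buys is self-containedness; what it costs is the remainder bookkeeping you yourself flag as unfinished, plus two concrete mismatches with the lemma as stated. First, your absorption of the positive $\frac{\eta_c^2L\rho}{2}\|\nabla f_c\|^2$ term requires $\eta_cL\rho\le\tfrac12$, a step-size restriction that is not among the lemma's hypotheses; the paper's $\delta=2\eta_c$ Young's step is precisely how it avoids any such condition. Second, your attribution of the $\eta_c^3L^2\rho^2$ term to Taylor remainders ``after rescaling'' is not its actual origin---in the paper it is exactly the Young's surcharge from the linear $\|\nabla f_c\|$ term---although, since that term is nonnegative, a derivation that omits it (under your step-size condition) still implies the stated bound. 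If you complete the moment/remainder bookkeeping (or simply cite \cite{li2024achieving} for the intermediate inequality, as the paper does) and either add the step-size condition explicitly or switch to the paper's Young's-inequality absorption, your argument closes.
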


\begin{proof}
Following the analysis framework established in Theorem 2 of \cite{li2024achieving}, specifically adapting their Eq.~(69) to our notation, we obtain the initial bound:
\begin{equation}\label{eq:initial_consistent}
    \begin{aligned}
        \mathbb{E}[\mathcal{C}]\leq& -\frac{\eta_c}{2}\|\nabla f_c(\boldsymbol{\theta}_c^t)\|^2 + \frac{\eta_c\mu^2L^2}{8}(d+3)^3+\eta_c^2L^4\mu^2d^3\\&
        +\eta_c^2L\rho \big(\|\nabla f_c(\boldsymbol{\theta}_c^t)\|+\frac{\sigma^2+G_s^2}{N}\big),
    \end{aligned}
\end{equation}
where $\rho$ is defined as in Lemma~\ref{lemma:client_bound}. Our goal is to absorb the linear gradient norm term into the negative quadratic term. First, we isolate the terms involving $\|\nabla f_c(\boldsymbol{\theta}_c^t)\|$ from the constant terms:
\begin{equation}\label{eq:terms_of_interest}
    \text{RHS} \leq \underbrace{-\frac{\eta_c}{2}\|\nabla f_c(\boldsymbol{\theta}_c^t)\|^2 + \eta_c^2 L \rho \|\nabla f_c(\boldsymbol{\theta}_c^t)\|}_{\mathcal{T}_{\text{grad}}} + \mathcal C^{\prime}_1,
\end{equation}
where $\mathcal C^{\prime}_1$ collects the terms independent of the gradient norm:
\begin{equation}
    \mathcal C^{\prime}_1 \triangleq \frac{\eta_c\mu^2L^2}{8}(d+3)^3+\eta_c^2L^4\mu^2d^3 + \frac{\eta_c^2 L \rho}{N}(\sigma^2+G_s^2).
\end{equation}
To bound the linear term in $\mathcal{T}_{\text{grad}}$, we apply Young's inequality, $ab \le \frac{\delta}{2}a^2 + \frac{1}{2\delta}b^2$, for a free parameter $\delta > 0$. By setting $a = \eta_c L \rho$ and $b = \eta_c \|\nabla f_c(\boldsymbol{\theta}_c^t)\|$, we have:
\begin{equation}\label{eq:youngs_step}
    \begin{aligned}
        \eta_c^2 L \rho \|\nabla f_c(\boldsymbol{\theta}_c^t)\| \le \frac{\delta}{2} (\eta_c L \rho)^2 + \frac{1}{2\delta} \eta_c^2 \|\nabla f_c(\boldsymbol{\theta}_c^t)\|^2.
    \end{aligned}
\end{equation}
Substituting \eqref{eq:youngs_step} back into $\mathcal{T}_{\text{grad}}$ yields:
\begin{equation}
    \mathcal{T}_{\text{grad}} \le \big( -\frac{\eta_c}{2} + \frac{\eta_c^2}{2\delta} \big) \|\nabla f_c(\boldsymbol{\theta}_c^t)\|^2 + \frac{\delta \eta_c^2 L^2 \rho^2}{2}.
\end{equation}
We strategically choose $\delta = 2\eta_c$ to ensure the coefficient of the squared gradient norm becomes $-\frac{\eta_c}{4}$:
\begin{equation}
    -\frac{\eta_c}{2} + \frac{\eta_c^2}{2(2\eta_c)} = -\frac{\eta_c}{2} + \frac{\eta_c}{4} = -\frac{\eta_c}{4}.
\end{equation}
With this choice, the additional term from Young's inequality becomes $\frac{(2\eta_c)\eta_c^2 L^2 \rho^2}{2} = \eta_c^3 L^2 \rho^2$. Combining this with $\mathcal C^{\prime}_1$ in \eqref{eq:terms_of_interest} completes the proof.
\end{proof}

\subsubsection{Proof of Theorem \ref{theorem:low_rank}}
Same as Theorem 1, the analysis can be naturally divided into two parts: client-side optimization and server-side optimization. 
\begin{proof}
Combining the bounds of $\mathbb{E}_t[\mathcal C]$ from Lemma \ref{lemma:li2024} and $\mathbb{E}_t[\mathcal S]$ same as (\ref{eq:server_bound}), we have:
\begin{equation}
    \begin{aligned}
        &\mathbb{E}_t[\mathcal{C} + \mathcal{S}]=\mathbb{E}[\mathcal{C} + \mathcal{S}]\\
        \leq& -\frac{\eta_c }{4}\Vert\nabla f_c(\boldsymbol{\theta}_c^t)\Vert^2 + \Phi_c'(\eta_c)+ \frac{L}{2}N^2\eta_s^2G_s^2\\
        &+ \frac{\eta_sG_s^2}{2N} \sum\nolimits_{i=1}^{N} d_{c,i}^t  -\frac{\eta_s(2N-1)}{4}\|\nabla f_s(\boldsymbol\theta_{s}^t)\|^2,
    \end{aligned}
\end{equation}
where $\Phi_c'$ is defined as:
\begin{equation}
    \begin{aligned}
        \Phi_c' = &\frac{\eta_c\mu^2L^2}{8}(d+3)^3+\eta_c^2L^4\mu^2d^3 + \eta_c^3L^2\big(1+\frac{d\kappa+d-2}{d+2}\big)^2 \\&+ \eta_c^2L\big(1+\frac{d\kappa+d-2}{d+2}\big)\frac{1}{N}(\sigma^2+G_s^2) .
    \end{aligned}
\end{equation}
Following the proof methodology of Theorem 4.1, we derive the following bound on the global gradient norm:
\begin{equation}
\begin{aligned}
    \|\nabla f(\boldsymbol{\theta}^t_\text{g})\|^2
    \leq& \frac{1}{\eta_{\min}}\big(  f(\boldsymbol\theta_\text{g}^t) - \mathbb E\big[f(\boldsymbol\theta_\text{g}^{t+1}) \big] + \Phi_c'(\eta_c) \\
    &+ \frac{\eta_s G_s^2}{2N} \sum_{i=1}^{N} d_{c,i}^t+ \frac{1}{2}LN^2\eta_s^2G_s^2 \big),
\end{aligned}
\end{equation}
where $\eta_{\min} \triangleq \min\{ \frac{\eta_c}{4}, \frac{\eta_s(2N-1)}{4}\}$. Taking the full expectation and summing over $t=1,\dots,T$ under Assumption \ref{assumption:distribution_drift}, we obtain:
\begin{equation}
\begin{aligned}
    \min_{t\in[T]}\mathbb{E}\big[\|\nabla f(\boldsymbol{\theta}^t_\text{g})\|^2\big]
    \leq& \frac{f(\boldsymbol\theta_\text{g}^1)-f(\boldsymbol\theta_\text{g}^{T+1})}{\eta_{\min} T} + \frac{\Phi_c'(\eta_c)}{\eta_{\min}} \\
    &+ \frac{\eta_s G_s^2 \sum_{i=1}^{N} d_{c,i}^t}{2N \eta_{\min}} + \frac{LN^2\eta_s^2G_s^2}{2\eta_{\min}} .
\end{aligned}
\end{equation}
To ensure consistent convergence rates across client and server updates, we couple the step sizes by setting $\eta \triangleq \eta_c/4 = (2N-1)\eta_s/4$. The bound can then be reorganized by the order of $\eta$:
\begin{equation}\label{eq:58_detailed}
\begin{aligned}
    &\min_{t\in[T]}\mathbb{E}\big[\|\nabla f(\boldsymbol{\theta}^t_\text{g})\|^2\big] \\
    \leq& \frac{f(\boldsymbol\theta_\text{g}^1)-f(\boldsymbol\theta_\text{g}^{T+1})}{\eta T} + \frac{\mu^2L^2}{2}(d+3)^3+  \frac{2G_s^2}{N(2N-1)}\sum\nolimits_{i=1}^{N} d_{c,i}^t \\
    &+ \eta \big[ \frac{8LN^2G_s^2}{(2N-1)^2} + 16L^4\mu^2d^3 + \frac{16L}{N}\big(1+\frac{d\kappa+d-2}{d+2}\big)(\sigma^2+G_s^2) \big] \\
    &+ \eta^2 \big[ 64L^2\big(1+\frac{d\kappa+d-2}{d+2}\big)^2 \big].
\end{aligned}
\end{equation}
To obtain an informative rate, we analyze the structure of the dominant terms. In typical federated learning settings, the coefficient of the leading $\mathcal{O}(\eta)$ variance term scales with the condition number $\kappa$, client count $N$, and batch size $B$, approximately behaving as $\mathcal{O}(\kappa/(NB))$.

Optimizing the bound requires balancing the initialization term $\mathcal{O}(1/(\eta T))$ with this dominant $\mathcal{O}(\eta)$ term. The condition $\frac{1}{\eta T} \asymp \eta \frac{\kappa}{NB}$ implies $\eta \propto \sqrt{NB/(\kappa T)}$. Accordingly, we set the parameters as follows:
\begin{equation}\label{eq:59_detailed}
    \eta, \eta_c = \mathcal{O}\big(\sqrt{\frac{NB}{\kappa T}}\big), \quad \eta_s = \mathcal{O}\big(\sqrt{\frac{B}{N\kappa T}}\big),
\end{equation}
and the smoothing parameter $\mu \leq \kappa^{1/4}(NT)^{-1/4}(d+3)^{-3/2}$. Substituting these choices into \eqref{eq:58_detailed}, we obtain the final convergence rate:
\begin{equation}\label{eq:60_detailed}
    \min_{t\in[T]}\mathbb{E}\big[\|\nabla f(\boldsymbol{\theta}^t_\text{g})\|^2\big] \leq \mathcal{O}\big(\sqrt{\frac{\kappa}{NBT}}\big) + \mathcal{O}\big(\frac{1}{T}\big) + C_{err},
\end{equation}
where the $\mathcal{O}(1/T)$ term arises from the $\eta^2$ components, and $C_{err}$ denotes the constant error floor:
\begin{equation}
    C_{err} \triangleq \frac{2}{\delta}\big[\frac{2G_s^2}{N(2N-1)}\Delta + \frac{\mu^2L^2}{2}(d+3)^3\big].
\end{equation}
This indicates that the algorithm converges to a neighborhood of the optimum determined by the drift limit $\Delta$ and the smoothing radius $\mu$.
\end{proof}

\section{Evidence for Low Rank Assumption}\label{App:low-rank}
In this section, we provide empirical and literature evidence that the low-effective-rank phenomenon is broadly present in deep model training (including CNN training and LM fine-tuning), which motivates the low-rank assumption used in our analysis.
First, we validated the low-effective rank assumption using a modified ResNet-18 on CIFAR-10. We estimated the Hessian eigenvalue density via the stochastic Lanczos algorithm \cite{golub1969calculation}, following the methodology of \cite{ghorbani2019investigation}. As shown in Fig.~\ref{fig:low_rank}, the resulting distribution, heavily concentrated at zero, suggests that the low-rank structure is an intrinsic property of the optimization landscape. The same empirical evidence can be seen in Appendix C.3.1 of \cite{li2024achieving}.

This observation extends to the regime of LMs, particularly during the fine-tuning phase. Recent works, such as GaLore \cite{zhao2024galore}, have provided robust evidence that while pre-training may necessitate high-rank updates, the weight modifications required for fine-tuning naturally reside in a low-rank subspace. This intrinsic low-dimensionality is a critical factor explaining the success of ZO optimization methods in this domain \cite{malladi2023fine}. 
It theoretically justifies why methods like MeZO \cite{malladi2023fine} can achieve competitive performance with LoRA updates, as they effectively navigate this low-rank manifold.

\begin{figure}[H]
    \centering
    \includegraphics[width=0.5\linewidth]{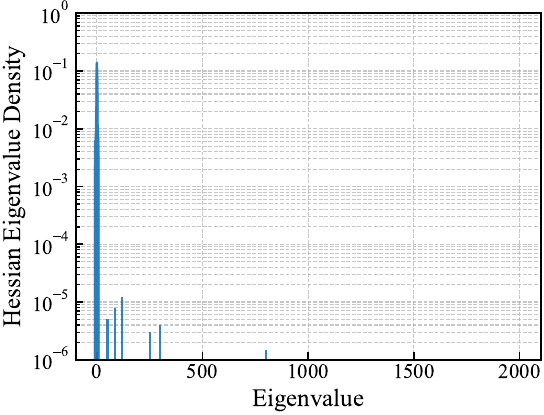}
    \caption{Hessian eigenvalue distribution with training a custom ResNet on the CIFAR-10 dataset.}
    \label{fig:low_rank}
\end{figure}
\bibliographystyle{IEEEtran}
\bibliography{refs}

@article{mu2025federated,
  title={Federated Split Learning with Improved Communication and Storage Efficiency},
  author={Mu, Yujia and Shen, Cong},
  journal={IEEE Transactions on Mobile Computing},
  year={2025},
  publisher={IEEE}
}

@inproceedings{han2021accelerating,
  title={Accelerating federated learning with split learning on locally generated losses},
  author={Han, Dong-Jun and Bhatti, Hasnain Irshad and Lee, Jungmoon and Moon, Jaekyun},
  booktitle={ICML 2021 workshop on federated learning for user privacy and data confidentiality. ICML Board},
  year={2021}
}

@inproceedings{thapa2022splitfed,
  title={Splitfed: When federated learning meets split learning},
  author={Thapa, Chandra and Arachchige, Pathum Chamikara Mahawaga and Camtepe, Seyit and Sun, Lichao},
  booktitle={Proceedings of the AAAI conference on artificial intelligence},
  volume={36},
  pages={8485--8493},
  year={2022}
}

@article{kairouz2021advances,
  title={Advances and open problems in federated learning},
  author={Kairouz, Peter and McMahan, H Brendan and Avent, Brendan and Bellet, Aur{\'e}lien and Bennis, Mehdi and Bhagoji, Arjun Nitin and Bonawitz, Kallista and Charles, Zachary and Cormode, Graham and Cummings, Rachel and others},
  journal={Foundations and trends{\textregistered} in machine learning},
  volume={14},
  number={1--2},
  pages={1--210},
  year={2021},
  publisher={Now Publishers, Inc.}
}

@article{vepakomma2018split,
  title={Split learning for health: Distributed deep learning without sharing raw patient data},
  author={Vepakomma, Praneeth and Gupta, Otkrist and Swedish, Tristan and Raskar, Ramesh},
  journal={arXiv preprint arXiv:1812.00564},
  year={2018}
}

@article{wu2020collaborate,
  title={Collaborate edge and cloud computing with distributed deep learning for smart city internet of things},
  author={Wu, Huaming and Zhang, Ziru and Guan, Chang and Wolter, Katinka and Xu, Minxian},
  journal={IEEE Internet of Things Journal},
  volume={7},
  number={9},
  pages={8099--8110},
  year={2020},
  publisher={IEEE}
}

@article{konevcny2015federated,
  title={Federated optimization: Distributed optimization beyond the datacenter},
  author={Kone{\v{c}}n{\`y}, Jakub and McMahan, Brendan and Ramage, Daniel},
  journal={arXiv preprint arXiv:1511.03575},
  year={2015}
}

@article{gupta2018distributed,
  title={Distributed learning of deep neural network over multiple agents},
  author={Gupta, Otkrist and Raskar, Ramesh},
  journal={Journal of Network and Computer Applications},
  volume={116},
  pages={1--8},
  year={2018},
  publisher={Elsevier}
}

@inproceedings{mcmahan2017communication,
  title={Communication-efficient learning of deep networks from decentralized data},
  author={McMahan, Brendan and Moore, Eider and Ramage, Daniel and Hampson, Seth and y Arcas, Blaise Aguera},
  booktitle={Artificial intelligence and statistics},
  pages={1273--1282},
  year={2017},
  organization={PMLR}
}

@article{liu2020primer,
  title={A primer on zeroth-order optimization in signal processing and machine learning: Principals, recent advances, and applications},
  author={Liu, Sijia and Chen, Pin-Yu and Kailkhura, Bhavya and Zhang, Gaoyuan and Hero III, Alfred O and Varshney, Pramod K},
  journal={IEEE Signal Processing Magazine},
  volume={37},
  number={5},
  pages={43--54},
  year={2020},
  publisher={IEEE}
}

@article{han2024convergence,
  title={Convergence analysis of split federated learning on heterogeneous data},
  author={Han, Pengchao and Huang, Chao and Tian, Geng and Tang, Ming and Liu, Xin},
  journal={Advances in Neural Information Processing Systems},
  volume={37},
  pages={103476--103544},
  year={2024}
}

@article{wang2021novel,
  title={A novel framework for the analysis and design of heterogeneous federated learning},
  author={Wang, Jianyu and Liu, Qinghua and Liang, Hao and Joshi, Gauri and Poor, H Vincent},
  journal={IEEE Transactions on Signal Processing},
  volume={69},
  pages={5234--5249},
  year={2021},
  publisher={IEEE}
}

@article{gao2018information,
  title={On the information-adaptive variants of the ADMM: an iteration complexity perspective},
  author={Gao, Xiang and Jiang, Bo and Zhang, Shuzhong},
  journal={Journal of Scientific Computing},
  volume={76},
  pages={327--363},
  year={2018},
  publisher={Springer}
}

@inproceedings{belilovsky2020decoupled,
  title={Decoupled greedy learning of cnns},
  author={Belilovsky, Eugene and Eickenberg, Michael and Oyallon, Edouard},
  booktitle={International Conference on Machine Learning},
  pages={736--745},
  year={2020},
  organization={PMLR}
}

@article{nair2025fsl,
  title={FSL-SAGE: Accelerating Federated Split Learning via Smashed Activation Gradient Estimation},
  author={Nair, Srijith and Lin, Michael and Ju, Peizhong and Talebi, Amirreza and Bentley, Elizabeth Serena and Liu, Jia},
  journal={arXiv preprint arXiv:2505.23182},
  year={2025}
}

@article{li2024achieving,
  title={Achieving dimension-free communication in federated learning via zeroth-order optimization},
  author={Li, Zhe and Ying, Bicheng and Liu, Zidong and Dong, Chaosheng and Yang, Haibo},
  journal={arXiv preprint arXiv:2405.15861},
  year={2024}
}

@inproceedings{oh2022locfedmix,
  title={Locfedmix-sl: Localize, federate, and mix for improved scalability, convergence, and latency in split learning},
  author={Oh, Seungeun and Park, Jihong and Vepakomma, Praneeth and Baek, Sihun and Raskar, Ramesh and Bennis, Mehdi and Kim, Seong-Lyun},
  booktitle={Proceedings of the ACM Web Conference 2022},
  pages={3347--3357},
  year={2022}
}

@article{nesterov2017random,
  title={Random gradient-free minimization of convex functions},
  author={Nesterov, Yurii and Spokoiny, Vladimir},
  journal={Foundations of Computational Mathematics},
  volume={17},
  number={2},
  pages={527--566},
  year={2017},
  publisher={Springer}
}

@article{malladi2023fine,
  title={Fine-tuning language models with just forward passes},
  author={Malladi, Sadhika and Gao, Tianyu and Nichani, Eshaan and Damian, Alex and Lee, Jason D and Chen, Danqi and Arora, Sanjeev},
  journal={Advances in Neural Information Processing Systems},
  volume={36},
  pages={53038--53075},
  year={2023}
}

@article{lin2024splitlora,
  title={Splitlora: A split parameter-efficient fine-tuning framework for large language models},
  author={Lin, Zheng and Hu, Xuanjie and Zhang, Yuxin and Chen, Zhe and Fang, Zihan and Chen, Xianhao and Li, Ang and Vepakomma, Praneeth and Gao, Yue},
  journal={arXiv preprint arXiv:2407.00952},
  year={2024}
}

@inproceedings{liu2019signsgd,
  title={signSGD via zeroth-order oracle},
  author={Liu, Sijia and Chen, Pin-Yu and Chen, Xiangyi and Hong, Mingyi},
  booktitle={International conference on learning representations},
  year={2019}
}

@article{liu2018zeroth,
  title={Zeroth-order stochastic variance reduction for nonconvex optimization},
  author={Liu, Sijia and Kailkhura, Bhavya and Chen, Pin-Yu and Ting, Paishun and Chang, Shiyu and Amini, Lisa},
  journal={Advances in neural information processing systems},
  volume={31},
  year={2018}
}

@inproceedings{chen2024deepzero,
  title={DeepZero: Scaling Up Zeroth-Order Optimization for Deep Model Training},
  author={Chen, Aochuan and Zhang, Yimeng and Jia, Jinghan and Diffenderfer, James and Parasyris, Konstantinos and Liu, Jiancheng and Zhang, Yihua and Zhang, Zheng and Kailkhura, Bhavya and Liu, Sijia},
  booktitle={ICLR},
  year={2024}
}

@article{fang2022communication,
  title={Communication-efficient stochastic zeroth-order optimization for federated learning},
  author={Fang, Wenzhi and Yu, Ziyi and Jiang, Yuning and Shi, Yuanming and Jones, Colin N and Zhou, Yong},
  journal={IEEE Transactions on Signal Processing},
  volume={70},
  pages={5058--5073},
  year={2022},
  publisher={IEEE}
}

@inproceedings{zhang2021desirable,
  title={Desirable companion for vertical federated learning: New zeroth-order gradient based algorithm},
  author={Zhang, Qingsong and Gu, Bin and Dang, Zhiyuan and Deng, Cheng and Huang, Heng},
  booktitle={Proceedings of the 30th ACM international conference on information \& knowledge management},
  pages={2598--2607},
  year={2021}
}

@inproceedings{nakashima2025unifying,
  title={Unifying Zeroth-Order Optimization and Genetic Algorithms for Reinforcement Learning},
  author={Nakashima, So and Kobayashi, Tetsuya J},
  booktitle={Proceedings of the Genetic and Evolutionary Computation Conference Companion},
  pages={311--314},
  year={2025}
}

@article{zhang2024zeroth,
  title={Zeroth-order policy gradient for reinforcement learning from human feedback without reward inference},
  author={Zhang, Qining and Ying, Lei},
  journal={arXiv preprint arXiv:2409.17401},
  year={2024}
}

@inproceedings{chen2017zoo,
  title={Zoo: Zeroth order optimization based black-box attacks to deep neural networks without training substitute models},
  author={Chen, Pin-Yu and Zhang, Huan and Sharma, Yash and Yi, Jinfeng and Hsieh, Cho-Jui},
  booktitle={Proceedings of the 10th ACM workshop on artificial intelligence and security},
  pages={15--26},
  year={2017}
}

@article{chen2016training,
  title={Training deep nets with sublinear memory cost},
  author={Chen, Tianqi and Xu, Bing and Zhang, Chiyuan and Guestrin, Carlos},
  journal={arXiv preprint arXiv:1604.06174},
  year={2016}
}

@book{griewank2008evaluating,
  title={Evaluating derivatives: principles and techniques of algorithmic differentiation},
  author={Griewank, Andreas and Walther, Andrea},
  year={2008},
  publisher={SIAM}
}

@inproceedings{chen2023layer,
  title={Which layer is learning faster? a systematic exploration of layer-wise convergence rate for deep neural networks},
  author={Chen, Yixiong and Yuille, Alan and Zhou, Zongwei},
  booktitle={The Eleventh International Conference on Learning Representations},
  year={2023}
}

@article{vaswani2017attention,
  title={Attention is all you need},
  author={Vaswani, Ashish and Shazeer, Noam and Parmar, Niki and Uszkoreit, Jakob and Jones, Llion and Gomez, Aidan N and Kaiser, {\L}ukasz and Polosukhin, Illia},
  journal={Advances in neural information processing systems},
  volume={30},
  year={2017}
}

@article{lee2024exploring,
  title={Exploring the privacy-energy consumption tradeoff for split federated learning},
  author={Lee, Joohyung and Seif, Mohamed and Cho, Jungchan and Poor, H Vincent},
  journal={IEEE Network},
  volume={38},
  number={6},
  pages={388--395},
  year={2024},
  publisher={IEEE}
}

@article{brown2020language,
  title={Language models are few-shot learners},
  author={Brown, Tom and Mann, Benjamin and Ryder, Nick and Subbiah, Melanie and Kaplan, Jared D and Dhariwal, Prafulla and Neelakantan, Arvind and Shyam, Pranav and Sastry, Girish and Askell, Amanda and others},
  journal={Advances in neural information processing systems},
  volume={33},
  pages={1877--1901},
  year={2020}
}

@article{chowdhery2023palm,
  title={Palm: Scaling language modeling with pathways},
  author={Chowdhery, Aakanksha and Narang, Sharan and Devlin, Jacob and Bosma, Maarten and Mishra, Gaurav and Roberts, Adam and Barham, Paul and Chung, Hyung Won and Sutton, Charles and Gehrmann, Sebastian and others},
  journal={Journal of Machine Learning Research},
  volume={24},
  number={240},
  pages={1--113},
  year={2023}
}

@inproceedings{belilovsky2019greedy,
  title={Greedy layerwise learning can scale to imagenet},
  author={Belilovsky, Eugene and Eickenberg, Michael and Oyallon, Edouard},
  booktitle={International conference on machine learning},
  pages={583--593},
  year={2019},
  organization={PMLR}
}

@article{sani2024future,
  title={The future of large language model pre-training is federated},
  author={Sani, Lorenzo and Iacob, Alex and Cao, Zeyu and Marino, Bill and Gao, Yan and Paulik, Tomas and Zhao, Wanru and Shen, William F and Aleksandrov, Preslav and Qiu, Xinchi and others},
  journal={arXiv preprint arXiv:2405.10853},
  year={2024}
}

@inproceedings{he2016deep,
  title={Deep residual learning for image recognition},
  author={He, Kaiming and Zhang, Xiangyu and Ren, Shaoqing and Sun, Jian},
  booktitle={Proceedings of the IEEE conference on computer vision and pattern recognition},
  pages={770--778},
  year={2016}
}

@article{krizhevsky2009learning,
  title={Learning multiple layers of features from tiny images},
  author={Krizhevsky, Alex and Hinton, Geoffrey and others},
  year={2009},
  journal = {Technical Report, University of Toronto},    
  publisher={Toronto, ON, Canada}
}

@article{hu2022lora,
  title={Lora: Low-rank adaptation of large language models.},
  author={Hu, Edward J and Shen, Yelong and Wallis, Phillip and Allen-Zhu, Zeyuan and Li, Yuanzhi and Wang, Shean and Wang, Lu and Chen, Weizhu and others},
  journal={ICLR},
  volume={1},
  number={2},
  pages={3},
  year={2022}
}

@article{novikova2017e2e,
  title={The E2E dataset: New challenges for end-to-end generation},
  author={Novikova, Jekaterina and Du{\v{s}}ek, Ond{\v{r}}ej and Rieser, Verena},
  journal={arXiv preprint arXiv:1706.09254},
  year={2017}
}

@inproceedings{radford2021learning,
  title={Learning transferable visual models from natural language supervision},
  author={Radford, Alec and Kim, Jong Wook and Hallacy, Chris and Ramesh, Aditya and Goh, Gabriel and Agarwal, Sandhini and Sastry, Girish and Askell, Amanda and Mishkin, Pamela and Clark, Jack and others},
  booktitle={International conference on machine learning},
  pages={8748--8763},
  year={2021},
  organization={PmLR}
}

@article{ouyang2022training,
  title={Training language models to follow instructions with human feedback},
  author={Ouyang, Long and Wu, Jeffrey and Jiang, Xu and Almeida, Diogo and Wainwright, Carroll and Mishkin, Pamela and Zhang, Chong and Agarwal, Sandhini and Slama, Katarina and Ray, Alex and others},
  journal={Advances in neural information processing systems},
  volume={35},
  pages={27730--27744},
  year={2022}
}

@inproceedings{karimireddy2020scaffold,
  title={Scaffold: Stochastic controlled averaging for federated learning},
  author={Karimireddy, Sai Praneeth and Kale, Satyen and Mohri, Mehryar and Reddi, Sashank and Stich, Sebastian and Suresh, Ananda Theertha},
  booktitle={International conference on machine learning},
  pages={5132--5143},
  year={2020},
  organization={PMLR}
}

@article{reddi2020adaptive,
  title={Adaptive federated optimization},
  author={Reddi, Sashank and Charles, Zachary and Zaheer, Manzil and Garrett, Zachary and Rush, Keith and Kone{\v{c}}n{\`y}, Jakub and Kumar, Sanjiv and McMahan, H Brendan},
  journal={arXiv preprint arXiv:2003.00295},
  year={2020}
}

@article{lin2024adaptsfl,
  title={Adaptsfl: Adaptive split federated learning in resource-constrained edge networks},
  author={Lin, Zheng and Qu, Guanqiao and Wei, Wei and Chen, Xianhao and Leung, Kin K},
  journal={arXiv preprint arXiv:2403.13101},
  year={2024}
}

@article{wu2023split,
  title={Split learning over wireless networks: Parallel design and resource management},
  author={Wu, Wen and Li, Mushu and Qu, Kaige and Zhou, Conghao and Shen, Xuemin and Zhuang, Weihua and Li, Xu and Shi, Weisen},
  journal={IEEE Journal on Selected Areas in Communications},
  volume={41},
  number={4},
  pages={1051--1066},
  year={2023},
  publisher={IEEE}
}

@article{lin2025hierarchical,
  title={Hierarchical split federated learning: Convergence analysis and system optimization},
  author={Lin, Zheng and Wei, Wei and Chen, Zhe and Lam, Chan-Tong and Chen, Xianhao and Gao, Yue and Luo, Jun},
  journal={IEEE Transactions on Mobile Computing},
  year={2025},
  publisher={IEEE}
}

@inproceedings{ling2024convergence,
  title={On the convergence of zeroth-order federated tuning for large language models},
  author={Ling, Zhenqing and Chen, Daoyuan and Yao, Liuyi and Li, Yaliang and Shen, Ying},
  booktitle={Proceedings of the 30th ACM SIGKDD Conference on Knowledge Discovery and Data Mining},
  pages={1827--1838},
  year={2024}
}

@article{zhao2024galore,
  title={Galore: Memory-efficient llm training by gradient low-rank projection},
  author={Zhao, Jiawei and Zhang, Zhenyu and Chen, Beidi and Wang, Zhangyang and Anandkumar, Anima and Tian, Yuandong},
  journal={arXiv preprint arXiv:2403.03507},
  year={2024}
}

@inproceedings{ghorbani2019investigation,
  title={An investigation into neural net optimization via hessian eigenvalue density},
  author={Ghorbani, Behrooz and Krishnan, Shankar and Xiao, Ying},
  booktitle={International Conference on Machine Learning},
  pages={2232--2241},
  year={2019},
  organization={PMLR}
}

@article{golub1969calculation,
  title={Calculation of Gauss quadrature rules},
  author={Golub, Gene H and Welsch, John H},
  journal={Mathematics of computation},
  volume={23},
  number={106},
  pages={221--230},
  year={1969}
}

@inproceedings{niu2024all,
  title={All Rivers Run to the Sea: Private Learning with Asymmetric Flows},
  author={Niu, Yue and Ali, Ramy E and Prakash, Saurav and Avestimehr, Salman},
  booktitle={Proceedings of the IEEE/CVF Conference on Computer Vision and Pattern Recognition},
  pages={12353--12362},
  year={2024}
}

@article{mohammadabadi2024speed,
  title={Speed up federated learning in heterogeneous environments: a dynamic tiering approach},
  author={Mohammadabadi, Seyed Mahmoud Sajjadi and Zawad, Syed and Yan, Feng and Yang, Lei},
  journal={IEEE Internet of Things Journal},
  year={2024},
  publisher={IEEE}
}

@article{shiranthika2024splitfedzip,
  title={SplitFedZip: Learned Compression for Data Transfer Reduction in Split-Federated Learning},
  author={Shiranthika, Chamani and Hadizadeh, Hadi and Saeedi, Parvaneh and Baji{\'c}, Ivan V},
  journal={arXiv preprint arXiv:2412.17150},
  year={2024}
}

@article{cai2020tinytl,
  title={Tinytl: Reduce activations, not trainable parameters for efficient on-device learning},
  author={Cai, Han and Gan, Chuang and Zhu, Ligeng and Han, Song},
  journal={arXiv preprint arXiv:2007.11622},
  year={2020}
}

@article{korthikanti2023reducing,
  title={Reducing activation recomputation in large transformer models},
  author={Korthikanti, Vijay Anand and Casper, Jared and Lym, Sangkug and McAfee, Lawrence and Andersch, Michael and Shoeybi, Mohammad and Catanzaro, Bryan},
  journal={Proceedings of Machine Learning and Systems},
  volume={5},
  pages={341--353},
  year={2023}
}

@article{zhang2024federated,
  title={Federated Split Learning with Model Pruning and Gradient Quantization in Wireless Networks},
  author={Zhang, Junhe and Ni, Wanli and Wang, Dongyu},
  journal={IEEE Transactions on Vehicular Technology},
  year={2024},
  publisher={IEEE}
}

@article{zheng2023reducing,
  title={Reducing communication for split learning by randomized top-k sparsification},
  author={Zheng, Fei and Chen, Chaochao and Lyu, Lingjuan and Yao, Binhui},
  journal={arXiv preprint arXiv:2305.18469},
  year={2023}
}

@article{liu2022wireless,
  title={Wireless distributed learning: A new hybrid split and federated learning approach},
  author={Liu, Xiaolan and Deng, Yansha and Mahmoodi, Toktam},
  journal={IEEE Transactions on Wireless Communications},
  volume={22},
  number={4},
  pages={2650--2665},
  year={2022},
  publisher={IEEE}
}

\end{document}